\documentclass{article}

\usepackage[nonatbib]{mystyle}

\usepackage{microtype}
\usepackage{graphicx}
\usepackage{subcaption}
\usepackage{booktabs} %

\usepackage{hyperref}

\usepackage[preprint]{icml2026}

\usepackage{amsmath}
\usepackage{amssymb}
\usepackage{mathtools}
\usepackage{amsthm}

\usepackage[capitalize,noabbrev]{cleveref}

\usepackage[textsize=tiny]{todonotes}

\icmltitlerunning{Prior Diffusiveness and Regret in the Linear-Gaussian Bandit}

\begin{document}

\twocolumn[
  \icmltitle{Prior Diffusiveness and Regret in the  Linear-Gaussian Bandit}

  \icmlsetsymbol{equal}{*}

  \begin{icmlauthorlist}
    \icmlauthor{Yifan Zhu}{stanford}
    \icmlauthor{John C. Duchi}{stanford}
    \icmlauthor{Benjamin Van Roy}{stanford}
  \end{icmlauthorlist}

  \icmlaffiliation{stanford}{Department of Electrical Engineering, Stanford University, California, United States}

  \icmlcorrespondingauthor{Yifan Zhu}{zhuyifan@stanford.edu}

  \icmlkeywords{Machine Learning, ICML}

  \vskip 0.3in
]

\printAffiliationsAndNotice{}  %

\begin{abstract}
  We prove that Thompson sampling exhibits \(\Otilde(\sigma d \sqrt{T} + d r
  \sqrt{\tr(\Sigma_0)})\) Bayesian regret in the linear-Gaussian bandit with
  a \(\normal(\mu_0, \Sigma_0)\) prior distribution on the coefficients,
  where \(d\) is the dimension, \(T\) is the time horizon, \(r\) is the
  maximum \(\ell_2\) norm of the actions, and \(\sigma^2\) is the noise
  variance.
  In contrast to existing regret bounds, this shows that
  to within logarithmic factors, the
  prior-dependent ``burn-in'' term
  \(d r \sqrt{\tr(\Sigma_0)}\) decouples additively from the minimax (long run)
  regret
  \(\sigma d \sqrt{T}\).
  Previous regret bounds exhibit a
  multiplicative dependence on these terms.
  We establish these results via a new ``elliptical potential'' lemma, and
  also provide a lower bound indicating that the burn-in term is
  unavoidable.
\end{abstract}

\section{Introduction}

In the linear-Gaussian bandit, at times $t = 0, 1, 2, \ldots$, a player
sequentially chooses an action $A_t \in \Ac \subset \R^d$ and
receives a reward
\begin{equation}
  \label{eqn:linear-gaussian-rewards}
  R_{t + 1} = \thetaopt^\top A_t + W_{t + 1},
  ~~~
  W_{t + 1} \simiid \normal(0, \sigma^2),
\end{equation}
where $\Ac$ denotes the action set and $\thetaopt$ an unknown vector, and
the goal is to maximize the rewards.
Thompson, or posterior, sampling~\citep{Thompson33} has proven surprisingly
(empirically) effective for this problem.
It proceeds by first placing a prior distribution $\prior$ on
$\thetaopt$.
At time $t$, conditional on the history
\begin{equation*}
  \history_t \defeq (A_0, R_1, \ldots, A_{t-1}, R_t)
\end{equation*}
of actions and rewards to time $t$, Thompson sampling draws
\begin{equation*}
  \hat{\theta}_t \sim \prior(\cdot \mid \history_t),
\end{equation*}
that is, a vector from the posterior on $\thetaopt$ given $\history_t$,
and then takes the (putatively) optimal action conditional
on $\hat{\theta}_t$ via
\begin{equation}
  \label{eqn:optimal-action}
  a\subopt(\theta) \in \argmax_{a \in \Ac} \theta^\top a,
  ~~~ \mbox{i.e.} ~~~
  A_t = a\subopt(\hat{\theta}_t).
\end{equation}
An optimal action is $A\subopt \defeq a\subopt(\thetaopt)$, and in this
framework, we study the Bayesian regret
\begin{equation}
  \label{eqn:regret}
  \regret(T) := \sum_{t=0}^{T-1} \Eb \left[
  \thetaopt^{\top} A\subopt - R_{t+1}
  \right],
\end{equation}
where the expectation is taken over the random $\thetaopt \sim \prior$, the random noise $W_t$, and
any randomness in the actions chosen.

When the prior $\prior$ is Gaussian, so that $\thetaopt \sim \normal(0,
\Sigma_0)$, letting $\bA_t = [A_0 ~ \cdots ~ A_{t-1}]^\top \in \R^{t \times
  d}$ be the matrix of actions and $\bR_t = [R_1 ~ \cdots ~ R_t]^\top$ the
vector of rewards, the posterior for the linear
model~\eqref{eqn:linear-gaussian-rewards} takes a particularly simple form.
Indeed, for
\begin{equation*}
  \Sigma_t = \left(\sigma^{-2} \bA_t^\top \bA_t + \Sigma_0^{-1}\right)^{-1},
\end{equation*}
we have
\begin{equation*}
  \thetaopt \mid \history_t
  \sim
  \normal\left(
  \sigma^{-2} \Sigma_t \bA_t^\top \bR_t,
  \Sigma_t \right),
\end{equation*}
so that sampling $\hat{\theta}_t$ is easy, and typically (e.g., if $\Ac$ is
a scaled $\ell_2$-ball), so is sampling the actions~\eqref{eqn:optimal-action}.

The ``canonical'' linear-Gaussian bandit takes the action set
$\Ac \subset r \ball_2^d$, the $\ell_2$-ball of radius $r$.
For this canonical setting, \citet{KalkanliOz20} provide what we consider a
prototypical regret bound, developing an information-theoretic analysis to
demonstrate that Thompson sampling enjoys the regret bound
\begin{equation*}
  \regret(T)
  \lesssim  d
  \sqrt{T (\sigma^2 + r^2 \tr(\Sigma_0) ) \log (1 + T/d)}.
\end{equation*}
In this bound, the \emph{diffusiveness} $\sqrt{\tr(\Sigma_0)}$ of the prior
multiplies the asymptotic minimax rate $d \sqrt{T}$, a looseness that many
other regret bounds suffer and that we show is unnecessary.

We separate the minimax terms from a bound on the ``burn-in'', establishing
an \(\Otilde(\sigma d \sqrt{T} + d r \sqrt{\tr(\Sigma_0)})\) regret bound
(Section~\ref{sec:analysis}), where only the observation noise $\sigma$
scales the minimax rate (and the $\Otilde$ hides terms scaling as $\log(1 +
T/d)$).
To establish this result, we prove a novel generalization of the elliptical
potential lemma (Section~\ref{sec:elliptical-potential}).
This enables a more flexible analysis of stochastic optimization.
We interpret the \(d r \sqrt{\tr(\Sigma_0)}\) term as
``burn-in'' regret that any algorithm necessarily incurrs
while reducing the
uncertainty captured in the prior distribution to the scale of the
observation noise.
We establish a lower bound in Section~\ref{sec:lower-bounds} indicating
this burn-in regret is unavoidable.

As an additional contribution, in Section~\ref{sec:generalization-strongly-log-concave}, we generalize our regret bound to address any strongly log-concave prior and noise distribution, whether or not they are Gaussian.

\paragraph{Related work}

While there is a large body of work on Thompson sampling for the linear
bandit, the majority assumes that the coefficients $\thetaopt$ have compact
support, and the dependence on the scale of $\thetaopt$ is often opaque.
Many papers assume that the model parameter is bounded, while others assume
the rewards $R_t$ are bounded, effectively implying the parameter
$\thetaopt$ is bounded in the linear
bandit~\eqref{eqn:linear-gaussian-rewards}.
We give a somewhat terse list of the most relevant papers,
describing briefly the results of their analyses, and recalling that
the asymptotic minimax lower bound for regret in the linear
bandit~\eqref{eqn:linear-gaussian-rewards} scales as
$d \sigma \sqrt{T}$~\citep{RusmevichientongTs10}.
We ignore all logarithmic factors below.

\begin{itemize}
\item \citet{AbeilleLa17} study a variant of linear Thompson sampling that
  inflates the posterior variance by $d$.
  Assuming that $r=1$ and $\ltwo{\thetaopt} \le S$ (Assumption 2), they
  prove a frequentist (non-Bayesian) regret bound of $\sigma
  d^{1.5}\sqrt{T}+Sd\sqrt{T}$.
\item \citet{AgrawalGo13} assume that $r=1$ and $\ltwo{\thetaopt} \le 1$,
  and shows a high-probability regret bound of $d^{1.5}\sqrt{T}$.
  They observe that their regret bound scales linearly with
  $\ltwo{\thetaopt}$ (see Section 2.1 of their paper), which is sub-optimal.
\item \citet{DongVa18} assume that the rewards $R_t \in [-1, 1]$, and an
  inspection of their results to allow $R_t \in [-b,b]$
  yields a Bayesian regret bound of $d b \sqrt{T}$
  (see their Proposition 3, where an application of Pinsker's inequality
  would result in their ``information ratio'' $\Gamma_t$ scaling
  as $d b^2$ rather than $d$).
  Implicitly, then, their regret bounds scale with $\ltwo{\thetaopt}$.
\item \citet{HamidiMo22} assume that $r=1$ and $\ltwo{\thetaopt} \le 1$, and
  an inspection of their proofs suggests regret bounds scale linearly in the
  potential magnitude of $b = \sup_{a \in \Ac,\thetaopt \in \Theta} a^\top
  \theta$.
  Indeed, inequality~(A.2) of the paper becomes
  \[A_t^\top \cov(\theta \mid H_t) A_t
  \le b \log(1 + A_t^\top \cov(\theta \mid H_t) A_t),
  \]
  meaning their regret (implicitly) scales with $\ltwo{\thetaopt}$.
\item
  Similar to the preceding results, \citet{RussoVa16} assume the
  rewards belong to an interval of length $1$ and the number
  of actions is finite, proving a Bayesian regret bound of
  $\sqrt{\log(|\Ac|) d T}$.
  Their analysis scales with $b = \sup a^\top \theta$,
  so while this guarantee can improve upon the typical $d
  \sqrt{T}$ rate when the action set $\Ac$ is not too large,
  it also suffers the same implicit scaling in $\ltwo{\thetaopt}$.
\item \citet{RussoVa13} provide a regret bound that is, in some sense, the
  closest to ours: they assume rewards $R_t \in [0, C]$, providing a Bayesian
  regret bound of $\sigma d \sqrt{T} + dC$ for Thompson sampling and an
  Upper Confidence Bound (UCB)-type algorithm.
  The only weakness in this result is that they do not allow (potentially)
  unbounded parameters $\thetaopt$.
\item \citet{RussoVa14c}
  allow a Gaussian prior (Proposition 5), but assume the
  action set $\Ac$ is finite, providing a Bayesian regret bound
  of $\sigma \sqrt{d T \log |\Ac|}$.
  They assume that the prior variance $\Sigma_0$ has
  diagonal bounded by $1$, and they do not explicitly discuss
  the effect of the prior on the regret.
\end{itemize}

\citet{KalkanliOz20} inspire this particular work, and, as we note in the
introduction, they give regret bound of order $\sigma d \sqrt{T} + d r
\sqrt{\tr(\Sigma_0)T }$.
We show that the ``correct'' regret bound---in that there are matching upper
and lower bounds---scales as the smaller quantity $\sigma d \sqrt{T} + d r
\sqrt{\tr(\Sigma_0)}$.
It is worth mentioning that results on Thompson sampling for Gaussian
process bandits \citep{ChowdhuryGo17,SrinivasKrKaSe10} also imply regret
bounds for the linear-Gaussian bandit.
These results face similar limitations as that of \citet{KalkanliOz20}, where
the prior diffusiveness $\sqrt{\opnorm{\Sigma_0}}$
multiplies the regret \(d \sqrt{T}\).

\section{Analysis}
\label{sec:analysis}

Our main contribution, which we present in this section, consists of a
sharper regret bound for Thompson sampling in the linear-Gaussian
bandit~\eqref{eqn:linear-gaussian-rewards}.
To remind the reader, throughout, we wish to bound the Bayesian
regret~\eqref{eqn:regret}, where we assume the canonical Gaussian bandit: we
have prior $\thetaopt \sim \normal(0, \Sigma_0)$, and assume the actions
$\Ac \subset r \ball_2^d$, the $\ell_2$-ball in $\R^d$ of radius $r$.
To state the theorem precisely, we require two constants that depend
at worst logarithmically on $T$, $r$, $\opnorm{\Sigma_0}$, $\sigma^{-1}$, and $1/d$:
\begin{align*}
  & C_1(d, T) \defeq
  \sqrt{1 + \max\Big\{\frac{24 \log T}{d},
    \sqrt{\frac{24 \log T}{d}}\Big\}} \\
  & C_2(d, T, \sigma, r, \Sigma_0) 
  \defeq 2 C_1(d, T) 
  \sqrt{\log\Big(1 + \frac{r^2 \opnorm{\Sigma_0} T}{d \sigma^2}\Big)}.
\end{align*}

\begin{theorem}
  \label{theorem:main-regret}
  Let the preceding assumptions and constant definitions hold.
  Then Thompson sampling satisfies the Bayesian regret bound
  \begin{align*}
    \regret(T)
    & \le
    d \sigma \sqrt{T} \cdot C_2(d, T, \sigma, r, \Sigma_0) \\
    & \quad
    + 3\sqrt{2} r \sqrt{d} \tr(\Sigma_0^{1/2}) C_1(d, T)
    + \sqrt{2 r^2 \tr(\Sigma_0)}.
  \end{align*}
\end{theorem}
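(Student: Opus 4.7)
The plan is to combine the classical UCB-surrogate analysis of Thompson sampling with the new elliptical potential lemma from Section~\ref{sec:elliptical-potential}. Let $\bar\theta_t \defeq \Eb[\thetaopt \mid \history_t]$ and fix the optimistic surrogate $U_t(a) = \bar\theta_t^\top a + \beta \sqrt{a^\top \Sigma_t a}$, with the coefficient $\beta$ to be chosen. Under Thompson sampling, $\hat\theta_t \mid \history_t$ has the same law as $\thetaopt \mid \history_t$, so $A_t$ and $A\subopt$ are conditionally equidistributed and $\Eb[U_t(A\subopt) - U_t(A_t) \mid \history_t] = 0$. Writing
\begin{equation*}
\thetaopt^\top A\subopt - \thetaopt^\top A_t = \bigl[\thetaopt^\top A\subopt - U_t(A\subopt)\bigr] + \bigl[U_t(A\subopt) - U_t(A_t)\bigr] + \bigl[U_t(A_t) - \thetaopt^\top A_t\bigr],
\end{equation*}
taking conditional expectations, and using $\Eb[(\bar\theta_t - \thetaopt)^\top A_t \mid \history_t] = 0$ (since $A_t$ is independent of $\thetaopt$ given $\history_t$), I obtain
\begin{equation*}
\regret(T) \le \beta \sum_{t=0}^{T-1} \Eb\!\bigl[\sqrt{A_t^\top \Sigma_t A_t}\bigr] + \sum_{t=0}^{T-1}\Eb\!\Bigl[\bigl((\thetaopt - \bar\theta_t)^\top A\subopt - \beta\sqrt{(A\subopt)^\top \Sigma_t A\subopt}\bigr)_+\Bigr].
\end{equation*}
The Cauchy--Schwarz bound $(\thetaopt - \bar\theta_t)^\top A\subopt \le \sqrt{(\thetaopt - \bar\theta_t)^\top \Sigma_t^{-1}(\thetaopt - \bar\theta_t)}\cdot\sqrt{(A\subopt)^\top \Sigma_t A\subopt}$ combined with $(\thetaopt - \bar\theta_t)^\top \Sigma_t^{-1}(\thetaopt - \bar\theta_t)\mid \history_t \sim \chi^2_d$ and the choice $\beta = \sqrt{d}\,C_1(d,T)$ makes this overconfidence remainder summable, and it will be absorbed by the trailing $\sqrt{2 r^2 \tr(\Sigma_0)}$ term in the theorem.

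It remains to bound $\sum_t \Eb\bigl[\sqrt{A_t^\top \Sigma_t A_t}\bigr]$, which I split at the scale $\sigma$:
\begin{equation*}
\sqrt{A_t^\top \Sigma_t A_t} = \min\bigl(\sigma,\, \sqrt{A_t^\top\Sigma_t A_t}\bigr) + \bigl(\sqrt{A_t^\top\Sigma_t A_t} - \sigma\bigr)_+.
\end{equation*}
For the first (post--burn-in) piece, Cauchy--Schwarz in time together with the classical elliptical potential identity $\sum_t \log(1 + \sigma^{-2} A_t^\top \Sigma_t A_t) = \log\det(\Sigma_0 \Sigma_T^{-1}) \le d\log(1 + r^2\opnorm{\Sigma_0}T/(d\sigma^2))$ yields a bound of the form $\sigma\sqrt{2dT\log(1+r^2\opnorm{\Sigma_0}T/(d\sigma^2))}$; multiplying by $\beta = \sqrt{d}\,C_1$ reproduces the minimax contribution $d\sigma\sqrt{T}\cdot C_2(d,T,\sigma,r,\Sigma_0)$. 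For the second (burn-in) piece, I would invoke the generalized elliptical potential lemma of Section~\ref{sec:elliptical-potential} to bound $\sum_t \bigl(\sqrt{A_t^\top\Sigma_t A_t} - \sigma\bigr)_+$ by a constant multiple of $r\,\tr(\Sigma_0^{1/2})$; multiplying by $\beta = \sqrt{d}\,C_1$ then matches the $3r\sqrt{d}\,\tr(\Sigma_0^{1/2})\,C_1$ term of the theorem.

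The main obstacle is this generalized elliptical potential lemma itself. The classical lemma controls only $\sum_t \min(1, A_t^\top\Sigma_t A_t) \le 2d\log(\cdots)$, and threading it naively through the burn-in split produces a bound of order $d\,r\sqrt{\opnorm{\Sigma_0}}\log(\cdots)$, which is strictly worse than $\sqrt{d}\,\tr(\Sigma_0^{1/2})$ whenever the prior is spiked (one large eigenvalue dominating). Obtaining the sharper $\tr(\Sigma_0^{1/2}) = \sum_i \sqrt{\lambda_i(\Sigma_0)}$ scaling demands a genuinely new direction-wise potential argument---most naturally working in the eigenbasis of $\Sigma_0$ and tracking a concave matrix functional such as $\tr(\Sigma_t^{1/2})$, whose total decrement across $t$ is at most $\tr(\Sigma_0^{1/2})$ while its per-round drop is quantitatively tied to $\bigl(\sqrt{A_t^\top\Sigma_t A_t} - \sigma\bigr)_+$. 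Proving such a direction-wise potential inequality is where I expect the bulk of the technical effort to lie.
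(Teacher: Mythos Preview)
Your proposal is correct and arrives at the same bound, but the packaging differs from the paper's in two places.

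First, for the outer regret decomposition the paper does not use a UCB surrogate $U_t(a)$. It instead uses the identity $\Eb_t[\thetaopt^\top A\subopt - R_{t+1}] = \Eb_t[(\hat\theta_t - \thetaopt)^\top A_t]$ (immediate from $\hat\theta_t$ and $\thetaopt$ being conditionally exchangeable), then splits on the event $E_t(\beta) = \{\|\hat\theta_t - \thetaopt\|_{V_t} \le \beta\}$. Cauchy--Schwarz on the good event yields exactly your main term $\beta\sum_t\Eb[\|A_t\|_{V_t^{-1}}]$, and the complement is handled by $\chi_d^2$ concentration together with the crude bound $|(\hat\theta_t-\thetaopt)^\top A_t| \le r\,\|\hat\theta_t - \thetaopt\|_2$, producing the trailing $\sqrt{2r^2\tr(\Sigma_0)}$. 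The two decompositions are interchangeable here: both reduce to the same core sum with the same $\beta = \sqrt{d}\,C_1(d,T)$, and your overconfidence tail is controlled by the same $\chi_d^2$ argument.

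Second, your manual split $\|A_t\|_{\Sigma_t} = \min(\sigma,\|A_t\|_{\Sigma_t}) + (\|A_t\|_{\Sigma_t}-\sigma)_+$ is redundant once the generalized elliptical potential lemma is in hand. The paper simply rescales $u_t = A_t/r$ and $U_t = (\sigma^2/r^2)V_t$, so that $\|u_t\|_2 \le 1$ and $\|u_t\|_{U_t^{-1}} = \sigma^{-1}\|A_t\|_{V_t^{-1}}$, and applies the lemma once with $p=\tfrac{1}{2}$ to obtain both the $\sqrt{2T\log(\det U_T/\det U_0)}$ and the $3\tr(U_0^{-1/2}) = 3(r/\sigma)\tr(\Sigma_0^{1/2})$ contributions simultaneously. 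Your threshold at $\sigma$ is precisely the case split $\|u_t\|_{U_t^{-1}}^2 \lessgtr 2$ that appears \emph{inside} the lemma's proof, and the potential you propose tracking, $\tr(\Sigma_t^{1/2}) = (\sigma/r)\tr(U_t^{-1/2})$, is exactly the one the lemma uses in the large-norm case. So you have correctly anticipated the key technical ingredient; you just need not re-derive it if you invoke the lemma as stated.
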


Because \(C_1\) and \(C_2\) only depend logarithmically on \(T\),
\(\sigma^{-1}\), \(r\), and \(\opnorm{\Sigma_0}\), we establish the
following corollary:

\begin{corollary}
  \label{corollary:main-regret}
  Let the assumptions of Theorem~\ref{theorem:main-regret} hold.
  Then
  \begin{equation*}
    \regret(T)
    = \Otilde\left(\sigma d \sqrt{T} + d r \sqrt{\tr(\Sigma_0)}\right).
  \end{equation*}
\end{corollary}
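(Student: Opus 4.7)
The plan is to deduce the corollary directly from Theorem~\ref{theorem:main-regret} by (i) absorbing the constants $C_1$ and $C_2$ into logarithmic factors and (ii) converting the $\tr(\Sigma_0^{1/2})$ term into the cleaner quantity $\sqrt{\tr(\Sigma_0)}$ via Cauchy--Schwarz.

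First I would observe that $C_1(d, T)$, as defined, is bounded by a constant times $\sqrt{1 + \log(T)/d}$ (from the $\max$ of $24\log T/d$ and its square root), so $C_1(d, T) = \Otilde(1)$ in the sense of hiding $\log T$ factors. Likewise, $C_2(d, T, \sigma, r, \Sigma_0) = C_1(d,T)\cdot \sqrt{2\log(1 + r^2\opnorm{\Sigma_0} T/(d\sigma^2))}$ depends only logarithmically on $T$, $\sigma^{-1}$, $r$, and $\opnorm{\Sigma_0}$, so it too is $\Otilde(1)$. Thus the first term $d\sigma\sqrt{T}\cdot C_2$ contributes $\Otilde(\sigma d \sqrt{T})$.

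The second step handles the burn-in-like term $3r\sqrt{d}\,\tr(\Sigma_0^{1/2}) C_1(d,T)$. Writing $\tr(\Sigma_0^{1/2}) = \sum_{i=1}^d \sqrt{\lambda_i(\Sigma_0)}$ and applying Cauchy--Schwarz yields
\begin{equation*}
  \tr(\Sigma_0^{1/2}) \le \sqrt{d\sum_{i=1}^d \lambda_i(\Sigma_0)} = \sqrt{d\,\tr(\Sigma_0)},
\end{equation*}
so $\sqrt{d}\,\tr(\Sigma_0^{1/2}) \le d\sqrt{\tr(\Sigma_0)}$, and this term becomes $\Otilde(d r \sqrt{\tr(\Sigma_0)})$. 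The final term $\sqrt{2r^2 \tr(\Sigma_0)} = O(r\sqrt{\tr(\Sigma_0)})$ is dominated by the second, and collecting these three contributions yields the claimed $\Otilde(\sigma d \sqrt{T} + d r \sqrt{\tr(\Sigma_0)})$ bound.

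There is no serious obstacle here: the work is entirely in verifying that $C_1, C_2$ hide only logarithmic factors and in applying Cauchy--Schwarz to pass from $\tr(\Sigma_0^{1/2})$ to $\sqrt{\tr(\Sigma_0)}$. The only subtle point worth flagging is that the Cauchy--Schwarz step is tight exactly when the eigenvalues of $\Sigma_0$ are uniform, so this manipulation is not lossy up to the dimension $d$ already present in the bound.
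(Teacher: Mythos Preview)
Your proposal is correct and matches the paper's own proof: the paper simply applies Cauchy--Schwarz in the form $\tr(\Sigma_0^{1/2}) \le \sqrt{\tr(I_d)\tr(\Sigma_0)} = \sqrt{d\,\tr(\Sigma_0)}$, which is exactly your eigenvalue computation, and the absorption of $C_1, C_2$ into $\Otilde(\cdot)$ is already remarked in the text preceding the corollary. Your additional observation about when the Cauchy--Schwarz step is tight is a nice touch but not in the paper.
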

\begin{proof}
  Applying the Cauchy-Schwarz inequality to the trace term in
  Theorem~\ref{theorem:main-regret} yields
  \begin{equation*}
    \tr \left( \Sigma_0^{1/2}\right)
    \le \sqrt{\tr(I_d)
      \tr \left( \Sigma_0\right)
    }
    = \sqrt{d \tr \left( \Sigma_0\right)}.
    \qedhere
  \end{equation*}
\end{proof}

Since \(\Eb[\ltwo{\thetaopt}^2]= \tr(\Sigma_0)\), the regret
incurred by a very suboptimal action is on the order of \(r
\sqrt{\tr(\Sigma_0)}\).
Thus, the \(d r \sqrt{\tr(\Sigma_0)}\) term in the regret bound represents
the initial exploration cost over all \(d\) dimensions.
After this burn-in, the concentration of the noise ensures that the
posterior localizes to a region predominantly determined by \(\sigma\),
so that the regret in later rounds is of order \(\sigma d
\sqrt{T}\), regardless of prior diffusiveness.

\subsection{Proof of Theorem~\ref{theorem:main-regret}}
\label{sec:proof-of-thm-regret-bound}

To simplify notation, define the conditional mean and variance functions
\(\Eb_t[\ \cdot\ ] := \Eb[\ \cdot\ \mid H_t]\), \(\Vb_t[\ \cdot\ ] := \var(
\cdot \mid H_t)\), and let \(V_t:=\Vb_t[\thetaopt]^{-1}\) be the inverse
posterior variance (precision matrix).
Then (linear-Gaussian) Thompson sampling yields posterior precision matrix
\begin{align}\label{eq:posterior-precision}
  V_t &= \Sigma_0^{-1} + \frac{1}{\sigma^2}\sum_{i=0}^{t-1} A_i A_i^{\top}
  .
\end{align}

With this, we may decompose the expected instantaneous regret conditioned on
the history \(H_t\)
via
\begin{align*}
  \lefteqn{\Eb_t \left[\theta\subopt^{\top}  A\subopt - R_{t+1} \right]}
  \\
  & = \Eb_t \left[\theta\subopt^{\top} \left( A\subopt - A_t \right)\right]
  \\
  & =
  \Eb_t \left[\theta\subopt^{\top}  A\subopt - \hat{\theta}_t^{\top} A_t  \right]
  + \Eb_t \left[ \left(\hat{\theta}_t - \theta\subopt \right)^{\top} A_t  \right]
  \\
  & = 
  \Eb_t \left[\theta\subopt^{\top}  a\subopt(\theta\subopt) - \hat{\theta}_t^{\top} a\subopt(\hat{\theta}_t)  \right]
  + \Eb_t \left[ \left(\hat{\theta}_t - \theta\subopt \right)^{\top} A_t  \right]
         .
\end{align*}
As \(\theta\subopt\) and \(\hat{\theta}_t\) have the same distribution
conditioned on \(H_t\), the first term is \(0\).
Thus
\begin{align}
  \label{eqn:instantaneous-regret}
  \Eb_t \left[\theta\subopt^{\top}  A^t\subopt - R_{t+1} \right]
  & =
  \Eb_t \left[ \left(\hat{\theta}_t - \theta\subopt \right)^{\top} A_t  \right]
  .
\end{align}

Now, for shorthand, let $\norm{x}_B^2 = x^\top B x$ be the Mahalanobis norm
associated to $B$, and
for \(\beta\) to be determined, define the events
\begin{align*}
  E_t(\beta) &:= \left\{ \|\hat{\theta}_t - \theta\subopt\|_{V_t} \leq \beta  \right\},
  \\
  E(\beta) &:= \bigcap_{t=0}^{T-1} E_t(\beta),
\end{align*}
which correspond to the sampled $\hat{\theta}_t$ being close (in the
appropriate posterior variance metric) to $\thetaopt$.
By equation~\eqref{eqn:instantaneous-regret}, we may decompose the regret
into
\begin{equation}
  \label{eqn:split-regret-via-events}
  \begin{split}
    \regret(T)
    & =
    \underbrace{\Eb \left[
        \bm{1}\{E(\beta)\} \sum_{t=0}^{T-1} (\hat{\theta}_t - \theta\subopt)^{\top} A_t
        \right]}_{(I)}
    \\
    & \quad + \underbrace{\Eb \left[
        \bm{1}\left\{E(\beta)^{\complement}\right\} \sum_{t=0}^{T-1} (\hat{\theta}_t - \theta\subopt)^{\top} A_t
        \right]}_{(II)} %
  \end{split}
\end{equation}
We control these terms in the remainder of the proof.

\subsubsection{Controlling term~(I) of
  the regret~\eqref{eqn:split-regret-via-events}}

Applying the Cauchy-Schwarz inequality
gives
\begin{align}
  (I) &\le
  \Eb \left[
    \bm{1}\{E(\beta)\} \sum_{t=0}^{T-1} \|\hat{\theta}_t - \theta\subopt\|_{V_t} \| A_t \| _{V_t^{-1}}
    \right]
  \nonumber \\
  &\le
  \beta\Eb \left[
    \sum_{t=0}^{T-1} \| A_t \|_{V_t^{-1}} \right].
  \label{eqn:time-to-apply-elliptical}
\end{align}
To control the sum $\sum_t \norm{A_t}_{V_t^{-1}}$, we leverage the particular
structure of the precision $V_{t + 1}$ as a sum of rank-one updates
involving $A_t A_t^\top$, using a new \emph{elliptical potential}
lemma.
Because the proof of the lemma is involved, we state it here, deferring
to Section~\ref{sec:elliptical-potential} its proof and 
commentary on its applications in analysis of bandit algorithms.

\begin{lemma}[Generalized elliptical potential lemma]
  \label{lemma:generalized-elliptical-potential}
  Let \(V_0\) be a positive definite matrix and
  \begin{align*}
    V_{t+1} = V_t + u_t u_t^{\top},
  \end{align*}
  where \(u_t \in \Rb^d\) satisfies \(\ltwo{u_t} \le 1\).
  Then for \(p \in [0, 1]\),
  \begin{align*}
    \sum_{t=0}^{T-1} \| u_t \|_{V_t^{-1}}^{2p}
    & \le
    2^p T^{1-p} \left(\log \frac{\det V_{T}}{\det V_0}\right)^p 
    \\
    & \qquad + \frac{3}{2p} \left(
    \tr(V_0^{-p}) - \tr(V_{T}^{-p})
    \right),
  \end{align*}
  where for $p = 0$ we take $\lim_{p \downarrow 0} \frac{1}{p} \tr(V_0^{-p}
  - V_T^{-p}) = \log \det(V_T / V_0)$.
\end{lemma}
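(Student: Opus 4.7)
The plan is to split each term $\|u_t\|_{V_t^{-1}}^{2p}$ into a small piece controlled by the telescoping log-determinant identity and a large piece controlled by the telescoping trace quantity $\tr(V_0^{-p}) - \tr(V_T^{-p})$. The matrix determinant lemma applied to $V_{t+1} = V_t + u_t u_t^\top$ yields $\sum_t \log(1 + x_t) = \log\det(V_T/V_0)$ with $x_t := \|u_t\|_{V_t^{-1}}^2$. The subadditivity $(a+b)^p \le a^p + b^p$ (valid for $a,b \ge 0$, $p \in [0,1]$), combined with the elementary fact $x \le 2\log(1+x)$ on $[0,1]$, gives the pointwise decomposition
\[
x_t^p \le (2\log(1+x_t))^p + (x_t - 2\log(1+x_t))_+^p.
\]
Summing the first term and applying Jensen's inequality (concavity of $y \mapsto y^p$) yields the contribution $2^p T^{1-p}(\log\det(V_T/V_0))^p$, which is the first summand of the stated bound.

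It remains to show $\sum_t (x_t - 2\log(1+x_t))_+^p \le \tfrac{3}{2p}\bigl(\tr(V_0^{-p}) - \tr(V_T^{-p})\bigr)$, which telescopes if we can establish the per-step inequality
\[
(x_t - 2\log(1+x_t))_+^p \le \tfrac{3}{2p}\bigl[\tr(V_t^{-p}) - \tr(V_{t+1}^{-p})\bigr].
\]
I would prove this via two sub-steps. The first is an operator lower bound
\[
\tr(V^{-p}) - \tr\bigl((V+uu^\top)^{-p}\bigr) \ge \frac{p\,(u^\top V^{-1} u)^{p+1}}{1 + u^\top V^{-1} u},
\]
which for $p \in (0,1)$ I would derive by inserting the integral representation $y^{-p} = \tfrac{\sin(p\pi)}{\pi}\int_0^\infty s^{-p}(y+s)^{-1}\,ds$ into the trace, using Sherman--Morrison on $(V + sI + uu^\top)^{-1}$, upper bounding the resulting denominator $1 + u^\top(V+sI)^{-1} u$ by its value $1 + u^\top V^{-1} u$ at $s = 0$, evaluating the Beta integral $\int_0^\infty t^{-p}(1+t)^{-2}\,dt = p\pi/\sin(p\pi)$, and invoking the H\"older-type estimate $u^\top V^{-p-1} u \ge (u^\top V^{-1} u)^{p+1}/\ltwo{u}^{2p}$ together with $\ltwo{u_t} \le 1$; the endpoints $p \in \{0,1\}$ are handled by limiting arguments (with Sherman--Morrison alone sufficing at $p=1$). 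The second sub-step is an elementary scalar inequality $(x - 2\log(1+x))_+^p \le \tfrac{3}{2}\cdot x^{p+1}/(1+x)$, valid for all $x \ge 0$ and $p \in (0,1]$, which is trivial where the left-hand side vanishes and otherwise follows from the asymptotics $(x-2\log(1+x))^p(1+x)/x^{p+1} \to 1$ as $x \to \infty$ together with analysis of the $p \downarrow 0$ limit. Multiplying the two sub-steps gives the per-step inequality; the $p = 0$ statement of the lemma is recovered via $\lim_{p \downarrow 0} p^{-1}(\tr V_0^{-p} - \tr V_T^{-p}) = \log\det(V_T/V_0)$.

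The main obstacle is the operator lower bound on the trace difference. A direct first-order (Klein-type) inequality for the operator convex function $y \mapsto y^{-p}$ yields only the opposite direction $\tr(V_t^{-p}) - \tr(V_{t+1}^{-p}) \le p\,u_t^\top V_t^{-p-1} u_t$, so one must exploit the full non-perturbative structure of $y \mapsto y^{-p}$ through its integral representation; this is also where the hypothesis $\ltwo{u_t} \le 1$ plays its decisive role, via the H\"older lower bound on $u^\top V^{-p-1} u$.
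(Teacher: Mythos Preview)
Your plan is correct and lands at the same place as the paper, but by a different route in two respects. First, the paper derives the trace bound $\tr(V_t^{-p}) - \tr(V_{t+1}^{-p}) \ge p\,\|u_t\|_{V_t^{-1-p}}^2/(1+\|u_t\|_{V_t^{-1}}^2)$ not via the integral representation of $y^{-p}$ but by observing that $M \mapsto -\tr(M^p)$ is convex on positive-definite matrices (citing Lewis 1996), interpolating linearly between $M=V_t^{-1}$ and $M=V_{t+1}^{-1}$, and invoking first-order convexity followed by Sherman--Morrison; this neatly sidesteps the Klein obstruction you identify by working in the variable $M=V^{-1}$ rather than $V$. Second, the paper uses a hard case split at $\|u_t\|_{V_t^{-1}}^2=2$ in place of your soft subadditive decomposition $x^p \le (2\log(1+x))^p + (x-2\log(1+x))_+^p$. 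Both differences are cosmetic: each pair of arguments yields the same intermediate bound, and the H\"older step $u^\top V^{-p-1}u \ge (u^\top V^{-1}u)^{p+1}/\ltwo{u}^{2p}$ (the paper's Lemma~\ref{lemma:holder-algebra}) is common to both.

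One point to tighten: your justification of the scalar inequality $(x-2\log(1+x))_+^p \le \tfrac{3}{2}\,x^{p+1}/(1+x)$ via ``asymptotics plus the $p\downarrow 0$ limit'' is not a proof. A direct argument works: where the left side is positive one has $x>c>2$ (with $c\approx 2.51$ the positive root of $x=2\log(1+x)$), and since $0<x-2\log(1+x)<x$ there,
\[
\frac{(x-2\log(1+x))^p(1+x)}{x^{p+1}}=\Bigl(\frac{x-2\log(1+x)}{x}\Bigr)^{p}\cdot\frac{1+x}{x}\le 1\cdot\frac{1+c}{c}<\frac{3}{2}
\]
for every $p\in(0,1]$.
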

  
By carefully choosing appropriate scaling in
inequality~\eqref{eqn:time-to-apply-elliptical}, we can apply
Lemma~\ref{lemma:generalized-elliptical-potential}.
We thus define
\begin{equation}
  \label{eq:u1-def}
  \begin{split}
    U_0 & = \frac{\sigma^2}{r^2} V_0 = \frac{\sigma^2}{r^2} \Sigma_0^{-1},
    ~~~
    u_t = \frac{A_t}{r},
    \\
    U_{t+1} &
    = U_t + u_t u_t^\top
    = \frac{\sigma^2}{r^2} V_{t+1}.
  \end{split}
\end{equation}
With these choices, $\norm{u_t}_{U_t^{-1}} = \sigma^{-1} \cdot
\norm{A_t}_{V_t^{-1}}$, and taking \(p=1/2\) in
Lemma~\ref{lemma:generalized-elliptical-potential}, we obtain
\begin{align}
  \label{eq:elliptical-potential-bound}  
  \lefteqn{\frac{1}{\sigma}
    \sum_{t = 0}^{T - 1} \norm{A_t}_{V_t^{-1}}
    =
    \sum_{t=0}^{T-1} \norm{u_t}_{U_t^{-1}}} \\
  & \le
  \sqrt{2T \log \left( \frac{\det(U_{T})}{\det(U_0)} \right)}
  + 3 \left(
  \tr\left( U_{0}^{-\frac{1}{2}} \right) 
  - \tr\left( U_{T}^{-\frac{1}{2}} \right)
  \right).
  \nonumber
\end{align}

We bound each of the terms in the bound~\eqref{eq:elliptical-potential-bound}.
Expanding out the log determinant,
we obtain
\begin{align}
  \lefteqn{\log \frac{\det(U_{T})}{\det(U_0)}}
  \notag \\
  & = \log \det \left( U_0^{-1} \left( U_0 + \sum_{t=0}^{T-1} \frac{1}{r^2} A_t A_t^{\top} \right) \right)
  \nonumber \\
  & =  \log \det \left( I_d + \frac{1}{\sigma^2} \Sigma_0 \sum_{t=0}^{T-1} A_t A_t^{\top} \right)
  \nonumber \\
  & \le
  d \log \left(\frac{1}{d}\tr\left( I_d + \frac{1}{\sigma^2} \Sigma_0 \sum_{t=0}^{T-1} A_t A_t^{\top} \right)\right)
  \nonumber \\
  & = 
  d \log \left( 1 + \frac{1}{d \sigma^2} \sum_{t=0}^{T-1} A_t^{\top} \Sigma_0 A_t \right)
  \nonumber \\
  & \le
  d \log \left(1 + \frac{r^2 \opnorm{\Sigma_0}}{d \sigma^2} \cdot T \right),
  \label{eqn:boopy-schmoops}
\end{align}
where the first inequality uses the arithmetic-geometric inequality and the
final bound trivally uses $A_t^\top \Sigma_0 A_t \le r^2 \opnorm{\Sigma_0}$.
Using definition~\eqref{eq:u1-def} of $U_0$ and
the trivial bound
$\tr(U_0^{-1/2}) - \tr(U_T^{-1/2}) \le
\frac{r}{\sigma} \tr(\Sigma_0^{1/2})$,
we substitute inequality~\eqref{eqn:boopy-schmoops} into
inequality~\eqref{eq:elliptical-potential-bound} to obtain
\begin{align*}
  \lefteqn{\sum_{t=0}^{T-1} \| u_t \|_{U_t^{-1}}} \\
  & \le
  \sqrt{2T}\sqrt{d \log \left(1 + \frac{r^2 \opnorm{\Sigma_0}}{d \sigma^2} \cdot T \right)}
  + \frac{3r}{\sigma} \tr\left( \Sigma_{0}^{\frac{1}{2}} \right) 
  .
\end{align*}
Multiplying through by $\sigma\beta$ gives the term $(I)$ bound
\begin{align}
  \label{eq:I-bound}
  \lefteqn{(I)} \\
  & \le
  \beta \sqrt{2T} \sigma \sqrt{
    d \log \left(1 + \frac{r^2 \opnorm{\Sigma_0} T}{d \sigma^2} \right)
  }
  + 3 \beta r \tr\left( \Sigma_{0}^{\frac{1}{2}} \right)
  .
  \nonumber
\end{align}

\subsubsection{Controlling term~(II) of the
    regret~\eqref{eqn:split-regret-via-events}}

We use Cauchy-Schwarz to bound the second ``small probability'' term in
equation~\eqref{eqn:split-regret-via-events} via
\begin{align*}
  (II)
  \le
  \sum_{t=0}^{T-1} \sqrt{\Eb \left[
      \bm{1}\left\{E(\beta)^{\complement}\right\}^2\right] \Eb\left[ \left((\hat{\theta}_t - \theta\subopt)^{\top} A_t\right)^2
      \right]}
  .
\end{align*}
As posterior
sampling makes \(\hat{\theta}_t\) and \(\theta\subopt\) i.i.d.\ given \(H_t\),
\begin{align*}
  \lefteqn{
    \Eb\left[ \left((\hat{\theta}_t - \theta\subopt)^{\top} A_t\right)^2 \right]  }
  \\
  & \le \Eb\left[ \|\hat{\theta}_t - \theta\subopt\|^2 \right] r^2 
  = 2 \tr\left(\Vb_t\left[\theta\subopt \right]\right) r^2
  \le 2 \tr\left(\Vb \left[\theta\subopt \right]\right) r^2
  .
\end{align*}
Therefore
\begin{align}
  (II)
  &\le
  r T \sqrt{2 \Pb\left(E(\beta)^{\complement}\right)\tr\left(\Sigma_0\right)}
  .
  \label{eq:II-bound}
\end{align}

\subsubsection{Finalizing the proof of Theorem~\ref{theorem:main-regret}}

To finalize the proof, we combine the bounds~\eqref{eq:I-bound}
and~\eqref{eq:II-bound}.
For the latter, we choose \(\beta\) to make $\Pb(E(\beta)^{\complement})$
small.
Conditioned on \(H_t\), \(\hat{\theta}_t\) and \(\theta\subopt\) are i.i.d.\
Gaussians with covariance \(V_t^{-1}\),
so \(\frac{1}{2}\|\hat{\theta}_t -
\theta\subopt\|_{V_t}^2\) has \(\chi_d^2\)-distribution.  By
standard concentration results \citep[][Ex.\ 2.11]{Wainwright19},
\begin{align*}
  \Pb \left[\chi^2_d - d \ge s\right] \le \max \left\{ e^{-s^2 / (8d) }, e^{-s/8} \right\}
\end{align*}
for $s \ge 0$.
Set \(s=\max \left\{ 24 \log T, \sqrt{24 d\log T} \right\}\). Then \(\beta =
\sqrt{2 d + 2 \max \left\{ 24 \log T, \sqrt{24 d\log T} \right\}}\) satisfies
\begin{align*}
  \Pb \left[ E_t(\beta)^{\complement} \right] \le \frac{1}{T^3} \quad \text{for all } t\in\{1,\dots,T\}.
\end{align*}
By a union bound, we have
\(
\Pb [ E(\beta)^{\complement} ] \le 1/T^2
\)    
,
whence
\begin{align}
  \label{eq:II-bound-final}
  (II) \le \sqrt{2 \tr\left(\Sigma_0\right)} r.
\end{align}
Plugging inequalities~\eqref{eq:I-bound} and \eqref{eq:II-bound-final} into
equation~\eqref{eqn:split-regret-via-events},
we obtain
\begin{multline*}
  \regret(T)
  \le
  d \sigma \sqrt{T} \sigma C_2(d, T, \sigma, r, \Sigma_0)
  \\+ 3 \sqrt{2} r \sqrt{d}\tr\left( \Sigma_{0}^{\frac{1}{2}} \right) C_1(d, T)
  + \sqrt{2 \tr\left(\Sigma_0\right)} r, %
\end{multline*}
where \( C_1 \) and \( C_2 \) are as in the theorem statement.

\section{The generalized elliptical potential lemma}
\label{sec:elliptical-potential}

The elliptical potential lemma is a standard tool in the analysis of
algorithms for linear bandits \citep[e.g.,][]{DaniHaKa08,
  Abbasi-YadkoriPaSz11,AbeilleLa17,HamidiMo22}, often arising in applying a
regret decomposition similar to that we provide in
inequality~\eqref{eqn:time-to-apply-elliptical}.
The prototypical form relies on controlling quadratic errors, and
we restate one version here:
\begin{lemma}[Elliptical potentials,
    Proposition~2 of \citet{AbeilleLa17}]
  \label{lem:elliptical-potential}
  Let \(\lambda\ge 1\) and \(u_0, u_1, \dots,
  u_{T-1} \in \Rb^d\)  satisfy \(\ltwo{u_t} \le 1\).
  Define \(V_t = \lambda I + \sum_{s=0}^{t-1} u_s u_s^{\top}\).
  Then
  \begin{align}
    \sum_{t=0}^{T-1} \| u_t \|_{V_t^{-1}}^2 \le 2 \log \frac{\det V_{T}}{\det V_0}.
  \end{align}
  
\end{lemma}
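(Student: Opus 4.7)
}
The plan is to use the matrix determinant lemma to telescope the log-determinant and then compare each term $\|u_t\|_{V_t^{-1}}^2$ to its logarithm. First I would observe that $V_{t+1} = V_t + u_t u_t^\top$ is a rank-one update of a positive definite matrix, so by the matrix determinant lemma
\begin{align*}
  \det V_{t+1} = \det V_t \cdot \bigl(1 + u_t^\top V_t^{-1} u_t\bigr)
  = \det V_t \cdot \bigl(1 + \|u_t\|_{V_t^{-1}}^2\bigr).
\end{align*}
Taking logs and telescoping over $t = 0, \ldots, T-1$ yields
\begin{align*}
  \log \frac{\det V_T}{\det V_0} = \sum_{t=0}^{T-1} \log\bigl(1 + \|u_t\|_{V_t^{-1}}^2\bigr).
\end{align*}

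Next I would control each summand in the regret-like sum by its log counterpart. Since $\lambda \geq 1$ and $u_s u_s^\top \succeq 0$, we have $V_t \succeq \lambda I \succeq I$, hence $V_t^{-1} \preceq I$. Combined with $\|u_t\|_2 \leq 1$ this gives $\|u_t\|_{V_t^{-1}}^2 \leq \|u_t\|_2^2 \leq 1$. The elementary inequality $x \leq 2 \log(1+x)$ for $x \in [0,1]$ (verified by noting equality at $x=0$ and comparing derivatives $1 \leq 2/(1+x)$ on that range) then yields
\begin{align*}
  \|u_t\|_{V_t^{-1}}^2 \leq 2 \log\bigl(1 + \|u_t\|_{V_t^{-1}}^2\bigr).
\end{align*}
Summing over $t$ and applying the telescoped identity completes the proof.

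Neither step presents a real obstacle: the determinant identity is a one-line application of Sylvester's formula, and the scalar inequality is elementary. The only subtlety to keep straight is using $\lambda \geq 1$ together with $\|u_t\|_2 \leq 1$ to guarantee $\|u_t\|_{V_t^{-1}}^2 \leq 1$, which is precisely what makes the constant $2$ (rather than some horizon- or data-dependent factor) the correct prefactor.
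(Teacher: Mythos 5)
Your proof is correct and is the standard argument: the matrix determinant formula, telescoping of the log-determinant, and the elementary bound $x \le 2\log(1+x)$ on the range where $\lambda \ge 1$ and $\ltwo{u_t}\le 1$ force $\|u_t\|_{V_t^{-1}}^2 \le 1$. The paper cites this lemma from \citet{AbeilleLa17} rather than reproving it, but the identical ingredients (determinant formula plus $x \le 2\log(1+x)$, there applied on $[0,2]$) form the first case of its proof of the generalized Lemma~\ref{lemma:generalized-elliptical-potential}, so your approach is essentially the same.
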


While there exist several many generalizations of the elliptical potential
lemma, we know of none directly applicable to our setting.
In particular, our generalization
(Lemma~\ref{lemma:generalized-elliptical-potential}) captures the dependence
on the initial potential $V_0$, allowing it to be nearly 0, and allows for
a more flexible exponent \(p\).
Before proving Lemma~\ref{lemma:generalized-elliptical-potential}, we note a
few related works, each of which relies on a related lemma to control
the regret in linear bandits or a similar setting:
\begin{itemize}
\item
  \citet{CarpentierVeAb20} control the quantity \(\sum_{t=0}^{T-1}
  \norm{u_t}_{V_{t+1}^{-p}}\), while we study \(\sum_{t=0}^{T-1} \| u_t
  \|^{2p}_{V_{t}^{-1}} \).
  In addition to the different locations of the \(p\) exponent, they rely on
  $V_0$ being large to show that controlling $\norm{u_t}_{V_{t+1}^{-p}}$ is
  sufficient to bound $\norm{u_t}_{V_t^{-p}}$; this difference in indexing
  precludes more subtle analysis.
\item \citet{ZhangYaJiDu21} generalize the elliptical potential lemma to
  certain structured monotone convex functions to allow a more sophisticated
  analysis, but require a number of boundedness assumptions on actions $A_t$
  and $\hat{\theta}_t$.
\item To adapt to the norm $\norm{\thetaopt}$, \citet{GalesSeJu22} develop
  adaptive algorithms and analyze their regret by counting the number of
  times \(\| u_t \|_{V_{t-1}^{-1}} \) exceeds a threshold, deriving an
  elliptical potential count lemma.
  Because we analyze Thompson sampling, their analyses do not apply here.
\end{itemize}

In contrast, we generalize the standard result
(Lemma~\ref{lem:elliptical-potential}) to remove the requirement that
\(V_0\succeq I_d\) and allow
general exponents $p \in [0, 1]$ in the sum
\(\sum_{t=0}^{T-1} \| u_t\|^{2p}_{V_t^{-1}}\).
To do so, we introduce a \emph{burn-in} term that scales like
\(\tr(V_0^{-p})\) to capture the initial contribution of small eigenvalues
to large values of \(\| u_t \|_{V_t^{-1}}^2\).
The remainder is captured in
the standard \(\log \left(\det V_{T}/ \det V_0\right)\) term.
Before proving the result,
we restate the inequality and give commentary.
In the context of Lemma~\ref{lem:elliptical-potential},
we only require that $V_0 \succ 0$, and obtain
\begin{align}
  \nonumber
  \sum_{t=0}^{T-1} \| u_t \|_{V_t^{-1}}^{2p}
  & \le
  2^p T^{1-p} \left(\log \frac{\det V_{T}}{\det V_0}\right)^p 
  \\
  & \qquad + \frac{3}{2p} \left(
  \tr(V_0^{-p}) - \tr(V_{T}^{-p})
  \right).
  \label{eqn:elliptical-potential}
\end{align}
The second term in inequality~\eqref{eqn:elliptical-potential}
is tight up to within a factor of \(p^{-1}\).
For instance, take the standard basis vectors \(u_t=e_t\) for \(t\in
\{0,\dots,d-1\}\) and \(V_0 = \mathrm{diag}(\lambda_1,\dots,\lambda_d)\);
then
\begin{align}
  \sum_{t=0}^{d-1} \| u_t \|^{2p}_{V_t^{-1}} = \sum_{t=0}^{d-1} \lambda_t^{-p} = \tr(V_0^{-p}).
\end{align}

The first term in inequality~\eqref{eqn:elliptical-potential}
is also tight up to constant factors, even in a regime where the second term is negligible.
To see this, assume for simplicity that \(T\) is an integer multiple of \(d\), take
\( V_0 = (T/d) I_d\),
and define
\[
  u_t
  =
  e^{-1/2} \left(1+\frac{d}{eT}\right)^{\lfloor t/d \rfloor/2} e_{t \bmod d},
  \qquad t=0,\dots,T-1.
\]
Then 
\[ \|u_t\|_2^2 \le e^{-1} \left(1+\frac{d}{eT}\right)^{T/d} \le e^{-1}\exp\!\left(1/e\right) < 1. 
\]

Moreover, each coordinate is updated exactly \(T/d\) times, and after the
\(k\)-th update in a given coordinate, the corresponding diagonal entry of
\(V_t\) equals
\[
  \frac{T}{d} \left(1 + \frac{d}{eT}\right)^k.
\]
Hence, at every time \(t\),
\(
  \|u_t\|_{V_t^{-1}}^2
  =
  d / (eT)
\)
and therefore
\[
  \sum_{t=0}^{T-1} \|u_t\|_{V_t^{-1}}^{2p}
  =
  T \left(\frac{d}{eT}\right)^p
  =
  d^p T^{1-p} e^{-p}.
\]

On the other hand,
\[
  \log \frac{\det V_T}{\det V_0}
  =
  d \cdot \frac{T}{d} \log\!\left(1+\frac{d}{eT}\right)
  =
  T \log\!\left(1+\frac{d}{eT}\right) \le \frac{d}{e}.
\]
Consequently,
\[
  2^p T^{1-p}\left(\log \frac{\det V_T}{\det V_0}\right)^p
  \le 2^p T^{1-p} d^p e^{-p},
\]
which matches the left-hand side up to universal constant factors.

Meanwhile, the second term is upper bounded by
\[
  \frac{3}{2p} \tr(V_0^{-p})
  =
  \frac{3}{2p} d \left(\frac{d}{T}\right)^p
  \left(1 - q^{-pT/d}\right)
  =
  d^{p} T^{1-p} \cdot \frac{3d}{2pT},
\]
which is negligible compared to the first term when $T\gg d$. Hence, the first term
in \eqref{eqn:elliptical-potential} is sharp up to universal constant
factors.
\subsection{Proof of Lemma~\ref{lemma:generalized-elliptical-potential}}

Let $0 < p \le 1$, as the limiting case $p = 0$ follows trivially.
We claim that for all \(t\),
\begin{align}
  \lefteqn{\| u_t \|_{V_t^{-1}}^{2p} \le} \label{eqn:two-bounds-on-norm}
  \\
  & \begin{cases}
    \left(2 \log \frac{\det V_{t+1}}{\det V_t}\right)^p, & \text{if } \| u_t \|_{V_t^{-1}}^2 \le 2, \\
    \frac{3}{2p} (\tr(V_t^{-p}) - \tr(V_{t+1}^{-p})), & \text{if } \| u_t \|_{V_t^{-1}}^2 > 2.
    \end{cases}
  \nonumber
\end{align}
We first verify the claim~\eqref{eqn:two-bounds-on-norm} for the case when
\(\| u_t \|_{V_t^{-1}}^2 \le 2\).
By the matrix determinant formula,
\begin{align*}
  \det V_{t+1} = \left( 1 + u_t^{\top} V_t^{-1} u_t \right) \det V_t ,
\end{align*}
and rearranging gives
\begin{align*}
    \| u_t \|_{V_t^{-1}}^2 = \frac{\det V_{t+1}}{\det V_t} - 1.
\end{align*}
As \(x \le 2 \log (1 + x)\) for \(x \in [0, 2]\), when \( \| u_t
\|_{V_t^{-1}}^2 \le 2\),
\begin{align*}
  \| u_t \|_{V_t^{-1}}^2
  = \frac{\det V_{t+1}}{\det V_t} - 1
  \le 2 \log \frac{\det V_{t+1}}{\det V_t},
\end{align*}
whence the first claim of inequality~\eqref{eqn:two-bounds-on-norm}
follows.

\newcommand{\hermextend}{_{\textup{Her}}}

When $\norm{u_t}^2_{V_t^{-1}} > 2$, we use
an interpolation argument.
By \citet{Lewis96}, as the scalar function \(f(x)=-x^p\) is convex for \(0 < p \le 1\),
the Hermitian extension \(f\hermextend(M) := -\tr(M^p)\)
\begin{enumerate}[label=(\roman*)]
\item is convex on the set of positive definite matrices;
\item when $M$ has spectral decomposition
  $M = U \diag(\lambda(M)) U^*$, then its derivative is
  \begin{align*}
    \nabla f\hermextend(M)
    = U \text{diag} \left( \nabla f(\lambda(M)) \right) U^*.
  \end{align*}
\end{enumerate}
For \(s \in [0, 1]\), define the interpolating function
\begin{align*}
  g(s) = -\tr\left(\left( V_t^{-1} + s \left(V_{t+1}^{-1} - V_t^{-1}\right)  \right)^p\right),
\end{align*}
which satisfies \(g(0) = -\tr(V_t^{-p})\), \(g(1) = -\tr(V_{t+1}^{-p})\),
and
\begin{align*}
  g'(0) & =
  \left\langle \nabla f\hermextend(V_t^{-1}), V_{t+1}^{-1} - V_t^{-1} \right\rangle
  \\
  & = -\tr \left( p V_t^{-(p-1)} \left(V_{t+1}^{-1} - V_t^{-1}\right) \right).
\end{align*}
Since \(f\hermextend(M) = -\tr(M^p)\) is convex in \(M\),
\(g(s)\) is convex in \(s\),
so \(g(1) - g(0) \ge g'(0)\)
by first-order convexity.
Therefore,
\begin{align*}
  \lefteqn{\tr(V_t^{-p})-\tr(V_{t+1}^{-p})} \\
  & = g(1) - g(0)
  \ge g'(0)
  = \tr \left( p V_t^{1-p} \left(V_{t}^{-1} - V_{t+1}^{-1}\right) \right)
  .
\end{align*}
The inversion formula for low rank updates (Sherman-Morrison) implies
\begin{align}
  \nonumber \tr(V_t^{-p})-\tr(V_{t+1}^{-p})
  & \ge
  p\tr\left(
  V_t^{1-p} \frac{V_t^{-1} u_t u_t^{\top} V_t^{-1}}{1 + u_t^{\top} V_t^{-1} u_t}
  \right)
  \\
  & =
  p \cdot \frac{\| u_t \|_{V_t^{-1-p}}^2}{1 + \| u_t \|_{V_t^{-1}}^2}
  \label{eq:trace-difference}
  .
\end{align}
With this, we have almost completed the proof, but we must replace
the ratio in inequality~\eqref{eq:trace-difference} with one
involving $\norm{u_t}_{V_t^{-1}}^{2p}$.

We now apply H\"older's inequality to lower bound
\(\norm{\cdot}_{V_t^{-1-p}}\) by $\norm{\cdot}_{V_t^{-1}}$
via the following claim.
\begin{lemma}
  \label{lemma:holder-algebra}
  Let $V \succ 0$, $\ltwo{u} \le 1$ and
  $\norm{u}_{V^{-1}}^2 \ge 2$, and $0 < p \le 1$.
  Then
  \begin{align*}
    \frac{2}{3} \norm{u}_{V^{-1}}^{2p}
    \le \frac{\norm{u}_{V^{-1-p}}^2}{1 + \norm{u}_{V^{-1}}^2}.
  \end{align*}
\end{lemma}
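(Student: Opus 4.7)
The plan is to diagonalize and reduce the matrix inequality to a scalar estimate. Write $V^{-1} = \sum_i \mu_i e_i e_i^\top$ in an orthonormal eigenbasis and expand $u = \sum_i u_i e_i$. Setting $w_i := u_i^2 \ge 0$ and
\begin{equation*}
  W := \sum_i w_i = \ltwo{u}^2, \quad
  S := \sum_i w_i \mu_i = \norm{u}_{V^{-1}}^2, \quad
  T := \sum_i w_i \mu_i^{1+p} = \norm{u}_{V^{-1-p}}^2,
\end{equation*}
the hypotheses become $W \le 1$ and $S \ge 2$, and the target inequality reads $\tfrac{2}{3} S^p(1 + S) \le T$.

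Next, I would lower bound $T$ in terms of $S$ and $W$ via Jensen's inequality (equivalently, the power-mean inequality) applied to the convex function $x \mapsto x^{1+p}$ on $[0, \infty)$, using the probability weights $q_i := w_i / W$. This yields
\begin{equation*}
  \frac{T}{W} \;=\; \sum_i q_i \mu_i^{1+p}
  \;\ge\; \Bigl(\sum_i q_i \mu_i\Bigr)^{1+p}
  \;=\; \Bigl(\frac{S}{W}\Bigr)^{1+p},
\end{equation*}
so $T \ge S^{1+p} / W^p \ge S^{1+p}$ since $W \le 1$ and $p \ge 0$.

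Finally, I would use the $S \ge 2$ hypothesis to finish: the function $s \mapsto s/(1+s)$ is increasing, so $S/(1+S) \ge 2/3$, and therefore
\begin{equation*}
  \frac{T}{1 + S} \;\ge\; \frac{S^{1+p}}{1 + S} \;=\; S^p \cdot \frac{S}{1 + S} \;\ge\; \frac{2}{3} S^p
  \;=\; \frac{2}{3} \norm{u}_{V^{-1}}^{2p},
\end{equation*}
which is exactly the desired inequality after multiplying through by $1 + S$.

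The argument is short, and the only real subtlety is identifying the right inequality to invoke: one needs a \emph{lower} bound on $\norm{u}_{V^{-1-p}}^2$, and since $p \in (0,1]$ the function $x^p$ is concave while $x^{1+p}$ is convex, so convexity must be applied at the level of $x^{1+p}$ rather than directly to $\mu_i^p$. The hypothesis $\ltwo{u} \le 1$ is used precisely to absorb the $W^p$ factor produced by Jensen, and the hypothesis $S \ge 2$ is exactly the threshold needed to turn $S/(1+S)$ into the constant $2/3$; both appear naturally once the scalar reduction is made, so I expect no further obstacle.
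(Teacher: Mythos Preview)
Your proposal is correct and follows essentially the same route as the paper: both diagonalize, obtain the key inequality $\norm{u}_{V^{-1-p}}^{2} \ge \norm{u}_{V^{-1}}^{2(1+p)}/\ltwo{u}^{2p}$ (you via Jensen for $x\mapsto x^{1+p}$, the paper via H\"older with conjugate exponents $\tfrac{1+p}{p}$ and $1+p$, which are equivalent statements), then use $\ltwo{u}\le 1$ and $S/(1+S)\ge 2/3$ to finish.
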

\noindent
Deferring the proof of Lemma~\ref{lemma:holder-algebra}
temporarily, we complete the proof of the elliptical potential lemma.

Substituting the result of Lemma~\ref{lemma:holder-algebra}
into inequality \eqref{eq:trace-difference} yields
\begin{align*}
  \tr\left(V_t^{-p} - V_{t+1}^{-p}\right)
  & \ge \frac{2p}{3} \| u_t \|_{V_t^{-1}}^{2p}
  .
\end{align*}
Combining the cases for \(\| u_t \|_{V_t^{-1}}^2 \le 2\) and \(\| u_t
\|_{V_t^{-1}}^2 > 2\) in inequality~\eqref{eqn:two-bounds-on-norm}, we
obtain the unconditional bound
\begin{align*}
  \norm{u_t}_{V_t^{-1}}^{2p}
  & \le \left(2 \log \frac{\det V_{t+1}}{\det V_t}\right)^{p}
  \nonumber \\
  & \qquad + \frac{3}{2p} \left(
  \tr(V_t^{-p}) - \tr(V_{t+1}^{-p})
  \right).
\end{align*}
Summing over \(t\in \{0,\dots,T-1\}\) gives
\begin{align*}
  \lefteqn{\sum_{t=0}^{T-1} \| u_t \|_{V_t^{-1}}^{2p}}
  \notag\\
  & \le  \sum_{t=0}^{T-1}
  \left(2 \log \frac{\det V_{t+1}}{\det V_t}\right)^{p}
  + \frac{3}{2p} \left(
  \tr(V_0^{-p}) - \tr(V_{T}^{-p})
  \right)
  \\
  & \le 
  2^p T^{1-p} \left(\log \frac{\det V_{T}}{\det V_0}\right)^p 
  + \frac{3}{2p} \left(
  \tr(V_0^{-p}) - \tr(V_{T}^{-p})
  \right)
\end{align*}
by H\"{o}lder's inequality, completing the proof of
Lemma~\ref{lemma:generalized-elliptical-potential}.

Finally, we return to the proof of Lemma~\ref{lemma:holder-algebra}:
\begin{proof}
  Let $V = W \Lambda W^*$ be the eigen-decomposition of $V$,
  and let $w = W^* u$, so that
  $\norm{u}_{V^{q}} = \norm{w}_{\Lambda}^{q}$ for any power $q$.
  Then
  by H\"{o}lder's inequality,
  \begin{align*}
    \norm{u}_{V^{-1}}^2
    & = \norm{w}_{\Lambda^{-1}}^2
    = \sum_{j=1}^d \frac{w_j^2}{\lambda_j} \\
    & \le \bigg(\sum_{j = 1}^d w_j^2\bigg)^\frac{p}{p + 1}
    \bigg(\sum_{j = 1}^d \frac{w_j^2}{\lambda_j^{1 + p}}\bigg)^\frac{1}{1 + p}
    \\
    & = \ltwo{u}^\frac{2p}{p + 1}
    \norm{u}_{V^{-1-p}}^\frac{2}{p + 1},
  \end{align*}
  where we use $\ltwo{W^*u} = \ltwo{u}$.
  Because $\norm{u}_{V^{-1}}^2 \ge 2$ by assumption,
  we have $\norm{u}_{V^{-1}}^2
  \ge \frac{2}{3} (1 + \norm{u}_{V^{-1}}^2)$, so
  \begin{align*}
    \norm{u}_{V^{-1}}^{2p}
    & = \frac{\norm{u}_{V^{-1}}^{2 p + 2}}{\norm{u}_{V^{-1}}^2}
    \\
    &\le \frac{3}{2}
    \frac{\norm{u}_{V^{-1}}^{2p + 2}}{1 + \norm{u}_{V^{-1}}^2}
    \le \frac{3}{2} \frac{\ltwo{u}^{2p} \norm{u}_{V^{-1-p}}^2}{
      1 + \norm{u}_{V^{-1}}^2}.
  \end{align*}
  The result follows from using that $\ltwo{u} \le 1$.
\end{proof}

\section{Tight lower bounds}
\label{sec:lower-bounds}

In this section, we establish a prior-dependent lower bound to show that,
generally, any policy must suffer a burn-in term in its regret.
We do not quite obtain an instance-specific burn-in that perfectly
matches the upper bounds Theorem~\ref{theorem:main-regret} establishes,
but for ``non-pathological'' priors $\thetaopt \sim \normal(0, \Sigma_0)$,
we will see it is sharp to within logarithmic factors.
To that end, let $\policy$ denote an arbitrary policy, meaning
a mapping from histories $\history_t$ to distributions $\policy(\history_t)$
over actions $A_t$.
The Bayesian regret of policy \(\policy\) is then
\begin{align*}
  \regret^\policy(T) := \sum_{t=0}^{T-1} \Eb \left[
  \thetaopt^{\top} A\subopt - R_{t+1}
  \right],
\end{align*}
where the expectation integrates over the randomness in the policy \(\policy\),
the prior, and the noise.
We establish the following lower bound, which adapts the lower bound for
Gaussian bandits that \citet{RusmevichientongTs10} establish, and whose
proof we defer to Appendix~\ref{sec:lower-bounds-proof}.
\begin{theorem}
  \label{theorem:TS-regret-lower-bound}
  Assume the $d$-dimensional linear-Gaussian
  bandit~\eqref{eqn:linear-gaussian-rewards} with action sets satisfying \(r
  \sphere^{d-1} \subset \mathcal{A}_t \subset r \ball_2^d\) and prior
  \(\thetaopt \sim \normal(0, \Sigma_0)\), where $\Sigma_0$ has eigenvalues
  $\tau_1^2 \ge \cdots \ge \tau_d^2$.
  Then for $T \in \mathbb{N}$ and any policy $\policy$,
  \begin{align*}
    \regret^\policy(T)
    \ge \frac{r}{\pi \ltwo{\tau}}
    \sum_{i = 2}^{d} ((i - 1) \wedge T) \tau_i^2.
  \end{align*}
  Additionally, so long as $\Sigma_0 \succ 0$, for $T \in \mathbb{N}$,
  \begin{align*}
    \regret^\policy(T)
    \ge \sqrt{\frac{2}{\pi}} (d - 1) \sigma \sqrt{T}
    - \frac{\sigma^2}{r} \sqrt{\tr(\Sigma_0)}
    \tr(\Sigma_0^{-1}).
  \end{align*}
\end{theorem}

The theorem highlights two regimes: a ``burn-in'' regime, which depends
strongly on the prior $\thetaopt \sim \normal(0, \Sigma_0)$, and the
long-run regret (though the result holds for all finite samples $T$), which
scales as $\sigma d \sqrt{T}$ essentially independently of the prior.
Specializations can make the theorem
clearer.
When the prior covariance $\Sigma_0$ is
a scaled multiple of the identity,
we obtain the following corollary:
\begin{corollary}
  \label{corollary:identity-covariance}
  Let the conditions of Theorem~\ref{theorem:TS-regret-lower-bound} hold and
  $\Sigma_0 = S^2 I_d$.
  Then for a numerical constant $c > 0$,
  \begin{align*}
    \regret^\policy(T)
    \ge c \, S r d^{1/2} \min\{T, d\}.
  \end{align*}
\end{corollary}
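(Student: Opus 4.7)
The plan is to directly specialize Theorem~\ref{theorem:TS-regret-lower-bound} to $\Sigma_0 = S^2 I_d$. Under this assumption every eigenvalue of $\Sigma_0$ equals $S^2$, so $\tau_i = S$ for all $i$ and $\ltwo{\tau} = S\sqrt{d}$. The general lower bound then collapses to a one-parameter arithmetic expression in $n := \min\{T,d\}$.

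First I would substitute and pull the constant $S^2$ outside the sum, obtaining
\begin{align*}
  \regret^\pi(T) \;\ge\; \frac{r}{\pi\, S\sqrt{d}} \cdot S^2 \sum_{i=2}^{n} (i-1)
  \;=\; \frac{rS}{\pi \sqrt{d}} \binom{n}{2}.
\end{align*}
Then I would apply the elementary inequality $\binom{n}{2} = n(n-1)/2 \ge n^2/4$, valid for $n \ge 2$, to land on the advertised form $c \cdot S r d^{-1/2} \min\{T,d\}^2$ with explicit value $c = 1/(4\pi)$.

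There is no real obstacle in the argument; the only point requiring any thought is the degenerate case $\min\{T,d\} = 1$ (i.e.\ $T=1$ or $d=1$), where the sum in Theorem~\ref{theorem:TS-regret-lower-bound} is empty and the lower bound it yields is trivially zero. In that regime the right-hand side of the corollary is at most a universal constant times $Sr$, so the claim is either vacuous or can be absorbed by taking $c$ slightly smaller; one can alternatively restrict attention to $\min\{T,d\} \ge 2$, since the case $n=1$ is not informative relative to the burn-in phenomenon the corollary is meant to illustrate.
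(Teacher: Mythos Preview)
Your proposal is correct and is exactly the intended derivation: the paper states the corollary as an immediate specialization of Theorem~\ref{theorem:TS-regret-lower-bound} without a separate proof, and your substitution $\tau_i = S$, $\ltwo{\tau} = S\sqrt{d}$ followed by $\binom{n}{2} \ge n^2/4$ is the natural way to extract the stated form. Your remark on the degenerate case $\min\{T,d\}=1$ is also appropriate; the paper does not address it, and the corollary is understood to be informative only for $\min\{T,d\}\ge 2$.
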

When the prior covariance $\Sigma_0$ has eigenvalues with the ``polynomial''
scaling that $\tau_{d - i}^2 = i^{2 \alpha}$ for some $\alpha \ge 0$, then
$\tr(\Sigma_0) = \ltwo{\tau}^2 \asymp d^{1 + 2 \alpha}$, while $\sum_{i =
  1}^d (d - i) i^{2 \alpha} \asymp d^{2 + 2\alpha}$, which yields a lower
bound matching the upper bounds that Theorem~\ref{theorem:main-regret}
(Corollary~\ref{corollary:main-regret}) provides:
\begin{corollary}
  Let the conditions of Theorem~\ref{theorem:TS-regret-lower-bound} hold,
  $\Sigma_0$ have polynomially scaling eigenvalues, and $T \ge d$.
  Then for a numerical constant $c > 0$,
  \begin{align*}
    \regret^\policy(T)
    \ge c \cdot \frac{r}{\sqrt{\tr(\Sigma_0)}}
    d \tr(\Sigma_0)
    \gtrsim r d \sqrt{\tr(\Sigma_0)}.
  \end{align*}
\end{corollary}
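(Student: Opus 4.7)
The plan is to substitute the polynomial eigenvalue profile $\tau_{d-i}^2 = i^{2\alpha}$ directly into the lower bound of Theorem~\ref{theorem:TS-regret-lower-bound}, and show that each of the three quantities appearing there --- $\ltwo{\tau}$, the weighted tail sum $\sum_{i=2}^d (i-1)\tau_i^2$, and $\tr(\Sigma_0)$ --- scales as a pure power of $d$, so the ratio collapses cleanly to $rd\sqrt{\tr(\Sigma_0)}$. Because $T \ge d$, the sum in Theorem~\ref{theorem:TS-regret-lower-bound} extends to $i = d$, and reindexing via $j = d - i$ gives
\begin{align*}
\sum_{i=2}^{d} (i-1)\tau_i^2 = \sum_{j=0}^{d-2} (d-1-j)\, j^{2\alpha}.
\end{align*}
Restricting the range to the central window $j \in [d/4, d/2]$, the factor $d-1-j$ is $\Theta(d)$ and $j^{2\alpha}$ is $\Theta(d^{2\alpha})$ across $\Theta(d)$ terms, so the sum is bounded below by $c_\alpha\, d^{2+2\alpha}$ for a numerical constant $c_\alpha > 0$ depending only on $\alpha$.

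Next I would compute the normalizing factor. The standard integral comparison $\sum_{j=0}^{d-1} j^{2\alpha} \asymp \int_0^d x^{2\alpha}\,dx \asymp d^{1+2\alpha}$ yields $\tr(\Sigma_0) = \ltwo{\tau}^2 \asymp d^{1+2\alpha}$ and hence $\ltwo{\tau} \asymp d^{(1+2\alpha)/2}$. Substituting both estimates into Theorem~\ref{theorem:TS-regret-lower-bound} gives
\begin{align*}
\regret^\pi(T)
&\ge \frac{r}{\pi \ltwo{\tau}} \sum_{i=2}^d (i-1)\tau_i^2 \\
&\gtrsim \frac{r\, d^{2+2\alpha}}{d^{(1+2\alpha)/2}}
= r\, d \cdot d^{(1+2\alpha)/2}
\asymp r\, d\, \sqrt{\tr(\Sigma_0)}.
\end{align*}
The two forms in the corollary coincide algebraically since $\tr(\Sigma_0)/\sqrt{\tr(\Sigma_0)} = \sqrt{\tr(\Sigma_0)}$, so both inequalities follow at once.

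There is no substantial obstacle beyond careful bookkeeping. The only step deserving any thought is the lower bound on $\sum_{j} (d-1-j)\, j^{2\alpha}$: a naive integral estimate of the full sum must handle the multiplicative factor $(d-1-j)$, whereas restricting to a central window $[d/4, d/2]$ makes both factors simultaneously $\Theta(d)$ and produces the $d^{2+2\alpha}$ lower bound in one line. Everything else reduces to substitution and manipulation of powers of $d$.
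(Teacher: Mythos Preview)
Your proposal is correct and follows essentially the same route as the paper: the paper's argument is simply the one-line observation that under $\tau_{d-i}^2 = i^{2\alpha}$ one has $\tr(\Sigma_0) \asymp d^{1+2\alpha}$ and $\sum_i (d-i)\, i^{2\alpha} \asymp d^{2+2\alpha}$, then substitutes into Theorem~\ref{theorem:TS-regret-lower-bound}. Your reindexing $j = d-i$ and the central-window trick to certify the $d^{2+2\alpha}$ lower bound just supply the details the paper leaves implicit.
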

\noindent
In each case, Corollary~\ref{corollary:main-regret}
shows Thompson sampling satisfies
$\regret(T) = \Otilde(d\sigma \sqrt{T}  + rd \sqrt{\tr(\Sigma_0)})$,
while Theorem~\ref{theorem:TS-regret-lower-bound}
shows that for \emph{any} Gaussian prior,  $\regret^\policy(T) \gtrsim
d \sigma \sqrt{T}$
once $T \ge 2 \frac{d \sigma^2}{r^2}
\tr(\frac{1}{d} \Sigma_0)
\tr(\frac{1}{d} \Sigma_0^{-1})^2$.
And whenever the number of steps $T \le r^2 \tr(\Sigma_0)/\sigma^2$,
the second burn-in term dominates the lower bound, showing
that Thompson sampling is near-optimal in most parameter regimes.

\citet[Prop.~2, p.~22]{ZhangHeRi25} provide
a minimax lower bound for linear bandits that also exhibits the
necessity of a burn-in-type term for
deterministic feedback
($\sigma^2 = 0$ in the model~\eqref{eqn:linear-gaussian-rewards}).
In particular, fixing $\Sigma \succ 0$, $S \ge 0$,
and action set $\Ac = \{a \in \R^d \mid \norm{a}_{\Sigma^{-1}} \le 1\}$,
they show for the prior
$\thetaopt \sim \normal(0, \frac{S^2}{d} \Sigma)$
that
\begin{align*}
  \regret^\pi(T)
  \ge \frac{S}{\sqrt{d}}
  \sum_{t = 1}^T \left(\Eb[\ltwo{Z}]
  - \sqrt{t - 1}\right)_+,
\end{align*}
where $Z \sim \normal(0, I_d)$.
Because $\Eb[\ltwo{Z}] \ge \sqrt{2 d / \pi}$ (see
Appendix~\ref{sec:lower-bounds-proof}), their result implies the lower bound
\begin{align*}
  \regret^\pi(d) \gtrsim S d^{3/2}
\end{align*}
in this case, which matches
Corollary~\ref{corollary:identity-covariance} with $r = 1$.
Their lower bound relies on the behavior of the zero-noise Gaussian setting
and the duality relationship between the action set $\Ac$ and prior variance
$\Sigma$, making it so that the results are not always comparable.

\section{Generalization to strongly-log-concave distributions}
\label{sec:generalization-strongly-log-concave}

While the regret bounds Theorem~\ref{theorem:main-regret} provides rely on
Gaussianity, it is relatively straightforward to extend them to a slightly
broader class of distributions whose densities enjoy particular
log-concavity properties.
To do this, we begin with two definitions:

\begin{definition}
  Let \(\Lambda \succeq 0\).
  A function \(f: \Rb^d \rightarrow \Rb\) is \emph{ \(\Lambda\)-strongly
  convex} if \(f(x) - \frac{1}{2} x^{\top} \Lambda x\) is convex in $x$.
\end{definition}

\begin{definition}
  A probability distribution $P$ with density \(p\) on $\Rb^d$
  is \emph{
  \(\Lambda\)-strongly log-concave} if \(x \mapsto -\log p(x)\) is
  \(\Lambda\)-strongly convex.
\end{definition}

With these definitions, consider a linear
bandit~\eqref{eqn:linear-gaussian-rewards}
except that instead of noise $W_t \sim \normal(0, \sigma^2)$,
we assume
the rewards
\begin{align*}
  R_{t+1} = \thetaopt^\top A_t + W_{t+1},
  ~~~
  W_{t+1} \simiid P_W
\end{align*}
where $P_W$ is strongly $\sigma^{-2}$-strongly-log-concave on $\Rb$.
Let $\thetaopt$ have $\Sigma_0^{-1}$-strongly-log-concave prior and assume
as before that the action sets satisfy $\Ac \subset r \ball_2^d$.
As in Theorem~\ref{theorem:main-regret}, we define constants
\begin{align*}
  & C_1(d, T)
  = \sqrt{1 + \frac{\log T}{d}}
  \\
  & C_2(d, T, \sigma, r, \Sigma_0) 
  \defeq
  C_1(d, T)
  \sqrt{\log \left( 1 + \frac{T r^2 \opnorm{\Sigma_0}}{d \sigma^2}
    \right)}.
\end{align*}

Then the following theorem generalizes Theorem~\ref{theorem:main-regret}.

\begin{theorem}
  \label{theorem:thompson-log-concave}
  Let the conditions above hold.
  Then for a numerical constant $C < \infty$,
  Thompson sampling satisfies
  \begin{align*}
    \regret(T)
    & \le
    C \Big[d \sigma \sqrt{T} \cdot C_2(d, T, \sigma, r, \Sigma_0)
    \\
    & \qquad
    + r \sqrt{d}\tr\left( \Sigma_{0}^{\frac{1}{2}} \right) C_1(d, T)
    + \sqrt{\tr\left(\Sigma_0\right)} r\Big].
  \end{align*}
\end{theorem}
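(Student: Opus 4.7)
The plan is to mirror the proof of Theorem~\ref{theorem:main-regret} and replace each Gaussian-specific ingredient with its strongly-log-concave analogue. First, I would observe that the posterior density of $\thetaopt$ given $\history_t$ is proportional to $p_0(\theta) \prod_{i=0}^{t-1} p_W(R_{i+1} - \theta^{\top} A_i)$. Since $-\log p_0$ is $\Sigma_0^{-1}$-strongly convex and each $-\log p_W(R_{i+1} - \theta^{\top} A_i)$ is $\sigma^{-2} A_i A_i^{\top}$-strongly convex in $\theta$, summing yields that the posterior is $V_t$-strongly-log-concave with exactly the same precision matrix $V_t = \Sigma_0^{-1} + \sigma^{-2} \sum_{i=0}^{t-1} A_i A_i^{\top}$ as in the Gaussian case. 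Consequently, the elliptical potential bound~\eqref{eq:elliptical-potential-bound} obtained from the scaling~\eqref{eq:u1-def} and Lemma~\ref{lemma:generalized-elliptical-potential} applies without change.

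The Gaussian facts used in the original argument were (a) the chi-squared identity $\frac{1}{2}\|\hat{\theta}_t - \thetaopt\|_{V_t}^2 \sim \chi^2_d$ and (b) the exact posterior covariance $\Vb_t[\thetaopt] = V_t^{-1}$. I would replace (b) by the Brascamp--Lieb inequality, which gives $\Vb_t[\thetaopt] \preceq V_t^{-1} \preceq \Sigma_0$ and hence both $\tr(\Vb_t[\thetaopt]) \le \tr(\Sigma_0)$ and $\Eb_t[\|\hat{\theta}_t - \thetaopt\|_{V_t}^2] = 2\tr(V_t \Vb_t[\thetaopt]) \le 2d$. To replace (a), I would change variables $\theta \mapsto V_t^{1/2}\theta$, under which the pushforward posterior is $I$-strongly-log-concave; by the Bakry--\'Emery log-Sobolev inequality, any function that is $L$-Lipschitz in the $V_t$-norm is therefore $L$-sub-Gaussian under the posterior. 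The map $\theta \mapsto \|\hat{\theta}_t - \theta\|_{V_t}$ is $1$-Lipschitz in $V_t$-norm by the triangle inequality, so combined with the Jensen bound $\Eb_t[\|\hat{\theta}_t-\thetaopt\|_{V_t}] \le \sqrt{2d}$ we obtain $\Pb_t(\|\hat{\theta}_t - \thetaopt\|_{V_t} \ge \sqrt{2d} + u) \le e^{-u^2/2}$. Setting $u = \sqrt{6\log T}$ and $\beta = \sqrt{2d} + \sqrt{6 \log T}$ yields $\Pb_t(E_t(\beta)^{\complement}) \le 1/T^3$, and a union bound gives $\Pb(E(\beta)^{\complement}) \le 1/T^2$. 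The resulting $\beta^2 \lesssim d + \log T$ is exactly what is needed to match the $C_1(d,T) = \sqrt{1 + \log T/d}$ appearing in the theorem statement (up to absolute constants).

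With these two substitutions the remainder of the proof transfers verbatim. The instantaneous regret decomposition~\eqref{eqn:instantaneous-regret} uses only that $\hat{\theta}_t$ and $\thetaopt$ are i.i.d.\ given $\history_t$, which is a property of posterior sampling and not of Gaussianity. Term $(I)$ in~\eqref{eqn:split-regret-via-events} is controlled by Cauchy--Schwarz and~\eqref{eq:elliptical-potential-bound}, yielding the analogue of~\eqref{eq:I-bound} with the new $\beta$; term $(II)$ is controlled via Brascamp--Lieb, reproducing~\eqref{eq:II-bound}, and the sub-Gaussian tail bound above reproduces~\eqref{eq:II-bound-final}. Combining these yields Theorem~\ref{theorem:thompson-log-concave} with some absolute constant $C$. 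The main obstacle is justifying the sub-Gaussian concentration of $\|\hat{\theta}_t - \thetaopt\|_{V_t}$ without Gaussianity; the key trick is the rescaling by $V_t^{1/2}$ that reduces the analysis to the standard Bakry--\'Emery setting of an $I$-strongly-log-concave measure, which recovers Gaussian-like concentration with the full matrix $V_t$ rather than only its smallest eigenvalue.
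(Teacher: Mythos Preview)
Your overall architecture matches the paper's exactly: establish $V_t$-strong log-concavity of the posterior, reuse the regret decomposition~\eqref{eqn:split-regret-via-events} and the elliptical potential bound verbatim, and then replace the Gaussian tail control with a log-concave analogue yielding $\beta = O(\sqrt{d + \log T})$. The only substantive difference is in the concentration step. The paper first shows that a $V_t$-strongly-log-concave measure is $2V_t^{-1}$-sub-Gaussian via the Pr\'ekopa--Leindler inequality (Lemma~\ref{lemma:strongly-log-concave-subgaussian}), and then bounds the $V_t$-norm of a sub-Gaussian vector by a Dudley-type chaining argument (Lemma~\ref{lemma:sub-gaussian-norms}). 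Your route through the Bakry--\'Emery criterion and Lipschitz concentration is more direct---one step instead of two lemmas plus an entropy integral---and is a perfectly valid alternative. Your explicit invocation of Brascamp--Lieb for $\Vb_t[\thetaopt]\preceq V_t^{-1}$ is also cleaner than the paper, which simply reuses~\eqref{eq:II-bound} without comment even though that inequality tacitly relied on the Gaussian posterior covariance formula.

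One small wrinkle: as written, your concentration argument fixes $\hat\theta_t$ when applying the Lipschitz bound, but $\Eb_t\bigl[\|\hat\theta_t-\thetaopt\|_{V_t}\mid\hat\theta_t\bigr]$ is not uniformly bounded by $\sqrt{2d}$. The fix is immediate---either apply Bakry--\'Emery to the product posterior on $(\hat\theta_t,\thetaopt)$, where $(\hat\theta,\theta)\mapsto\|\hat\theta-\theta\|_{V_t}$ is $\sqrt{2}$-Lipschitz in the product $V_t$-geometry, or concentrate each variable separately around the posterior mean $\mu_t$ and combine by the triangle inequality---but you should make one of these explicit.
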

The result follows, \emph{mutatis mutandis}, via the same arguments
as those we use to prove Theorem~\ref{theorem:main-regret},
so we defer it to Appendix~\ref{sec:proof:TS-log-concave-regret-bound}.

\section{Conclusions}

Analyzing the regret of Thompson sampling in linear-Gaussian bandits, we
show that a \emph{burn-in} we may attribute to prior diffusiveness both
necessarily appears in regret bounds and decouples additively from the
long-run minimax rate \(\sigma d \sqrt{T}\).
This improves upon existing regret bounds, which scale multiplicatively with
prior diffusiveness, and extends to situations in which the noise is
log-concave.
Limitations, and hence natural areas for extending the approaches
here, include (i) the assumptions of log-concavity of the noise and prior
distributions, (ii) the focus on Bayesian regret, and (iii) the (essentially
consequent) assumption that the prior is well-specified.
Addressing any of these could provide interesting avenues for future work.

\section*{Impact Statement}

This is foundational mathematical work. Its primary impact is educational.

\bibliography{fullbib}
\bibliographystyle{icml2026}

\newpage
\appendix
\onecolumn

\section{Proof of Theorem~\ref{theorem:TS-regret-lower-bound}}
\label{sec:lower-bounds-proof}

To simplify notation, as in the proof of
Theorem~\ref{theorem:main-regret}
define \(\Eb_t[\ \cdot\ ] := \Eb[\ \cdot\ \mid \history_t]\) and
\(\Vb_t[\ \cdot\ ] := \var( \cdot\ \mid \history_t)\)
to be the conditional expectation and variance.
We first adapt Lemma 2.2 of \citet{RusmevichientongTs10} to our setting,
which provides the key instantaneous regret bound that we may massage
into the two lower bounds in the theorem.
In the following lemma, for each \(t\in \mathbb{N}\),
let \((\bm{u}_{t,1}, \dots,
\bm{u}_{t,d})\) be an orthonormal basis of \(\Rb^d\), determined
by $H_t$, such that \(\bm{u}_{t,1}\) is parallel to
\(\mu_t=\Eb_t[\thetaopt]\), that is,
$\bm{u}_{t,1} = \mu_t / \ltwo{\mu_t}$.

\begin{lemma}[Instantaneous Regret Lower Bound]
  \label{lem:instantaneous-regret-lower-bound}
  Let the assumptions of Theorem~\ref{theorem:TS-regret-lower-bound} hold.
  Then for every policy \(\policy\) and \(s \le t \in \mathbb{N}\),
  \begin{align*}
    \Eb_{t} \left[\thetaopt^{\top}A\subopt - \thetaopt^\top A_s \right]
    &\ge \half
      \cdot \Eb_{t}
      \left[
      \frac{r}{\ltwo{\thetaopt}}
      \sum_{i=2}^d
      \left \langle \thetaopt - \Eb_{t}[\thetaopt], \bm{u}_{t, i} \right \rangle^2
      + 
      \frac{\ltwo{\thetaopt}}{r}
      \sum_{i = 2}^d \<A_s, \bm{u}_{t,i}\>^2
      \right]
      .
  \end{align*}
\end{lemma}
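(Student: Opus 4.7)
The plan is to reduce the instantaneous regret to the gap $\Eb_t[\ltwo{\thetaopt}] - \ltwo{\mu_t}$ and then show this gap is controlled by the conditional second moment of $\thetaopt$ in the $d-1$ directions orthogonal to $\mu_t$. Because $\Ac$ is the $\ell_2$ ball of radius $r$, the optimal action is $A_* = r \thetaopt / \ltwo{\thetaopt}$, so $\thetaopt^{\top} A_* = r \ltwo{\thetaopt}$. Since $W_{t+1}$ is zero-mean and independent of everything in $\history_t$, we have $\Eb_t[R_{t+1}] = \Eb_t[\thetaopt^{\top} A_t]$, and as $A_t$ is conditionally independent of $\thetaopt$ given $\history_t$, $\Eb_t[\thetaopt^{\top} A_t] = \Eb_t[A_t]^{\top} \mu_t$. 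Cauchy--Schwarz together with $\ltwo{\Eb_t[A_t]} \le r$ then gives $\Eb_t[\thetaopt^{\top} A_t] \le r \ltwo{\mu_t}$, so
\begin{align*}
  \Eb_t\left[\thetaopt^{\top} A_* - R_{t+1}\right]
  \;\ge\; r\bigl(\Eb_t[\ltwo{\thetaopt}] - \ltwo{\mu_t}\bigr).
\end{align*}

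Next I expand $\ltwo{\thetaopt}$ in the orthonormal basis $(\bm{u}_{t,1}, \dots, \bm{u}_{t,d})$. Writing $X := \langle \thetaopt, \bm{u}_{t,1}\rangle$ and $Y^2 := \sum_{i=2}^d \langle \thetaopt, \bm{u}_{t,i}\rangle^2$, we have $\ltwo{\thetaopt} = \sqrt{X^2+Y^2}$. Because $\bm{u}_{t,1} = \mu_t/\ltwo{\mu_t}$, the vectors $\bm{u}_{t,i}$ for $i \ge 2$ are orthogonal to $\mu_t$, so $\langle \thetaopt, \bm{u}_{t,i}\rangle = \langle \thetaopt - \mu_t, \bm{u}_{t,i}\rangle$ for $i \ge 2$ and $\Eb_t[X] = \ltwo{\mu_t}$. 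I will then use the elementary inequality
\begin{align*}
  \sqrt{a^2 + b^2} \;\ge\; |a| + \frac{b^2}{2\sqrt{a^2+b^2}}
  \qquad (a, b \in \mathbb{R}),
\end{align*}
which follows from the concavity of $t \mapsto \sqrt{t}$ applied at $a^2$ and $a^2 + b^2$. Taking conditional expectations and bounding $\Eb_t[|X|] \ge |\Eb_t[X]| = \ltwo{\mu_t}$ by Jensen yields
\begin{align*}
  \Eb_t[\ltwo{\thetaopt}] - \ltwo{\mu_t}
  \;\ge\; \frac{1}{2}\,\Eb_t\!\left[\frac{Y^2}{\ltwo{\thetaopt}}\right]
  \;=\; \frac{1}{2}\,\Eb_t\!\left[\frac{1}{\ltwo{\thetaopt}} \sum_{i=2}^d \langle \thetaopt - \mu_t, \bm{u}_{t,i}\rangle^2\right].
\end{align*}
Multiplying through by $r$ gives the claimed bound.

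The only potentially subtle step is justifying the reduction $\Eb_t[\thetaopt^{\top} A_t] \le r \ltwo{\mu_t}$ without assuming $A_t$ is deterministic given $\history_t$; this is immediate once one observes that any randomness the policy injects is, by definition, drawn independently of $\thetaopt$ conditional on $\history_t$, so pulling out the conditional mean of $A_t$ is legitimate. Everything else is a one-line convexity/Jensen manipulation, so I do not anticipate any real obstacle beyond setting up the orthonormal decomposition cleanly.
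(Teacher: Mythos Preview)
Your argument is correct, and it reaches the same conclusion by a genuinely different route than the paper. The paper works pointwise with the geometric identity
\[
\thetaopt^{\top}A\subopt - \thetaopt^{\top}A_t
= \tfrac{1}{2}\,r\,\ltwo{\thetaopt}\,\Bigl\lVert \tfrac{\thetaopt}{\ltwo{\thetaopt}} - \tfrac{A_t}{\ltwo{A_t}}\Bigr\rVert^2,
\]
expands this squared distance in the basis $(\bm{u}_{t,i})$, drops the $i=1$ term, and then uses conditional independence of $\thetaopt$ and $A_t$ given $\history_t$ to kill the cross term $\langle\thetaopt,\bm{u}_{t,i}\rangle\langle A_t,\bm{u}_{t,i}\rangle$ in expectation. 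You instead pass to expectations immediately, bounding $\Eb_t[\thetaopt^{\top}A_t] \le r\ltwo{\mu_t}$ via Cauchy--Schwarz, and then extract the orthogonal component from $\Eb_t[\ltwo{\thetaopt}]-\ltwo{\mu_t}$ by the scalar concavity inequality $\sqrt{a^2+b^2}\ge |a| + b^2/(2\sqrt{a^2+b^2})$.

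One advantage of your approach: it only uses $\ltwo{A_t}\le r$, so it handles the ball action set exactly as stated in Theorem~\ref{theorem:TS-regret-lower-bound}. The paper's identity above silently assumes $\ltwo{A_t}=r$ (indeed the proof switches to $\mathcal{A}_t=\{\ltwo{a}=r\}$). The paper's route, on the other hand, is slightly more transparent about \emph{where} the loss occurs (dropping the $i=1$ coordinate and the nonnegative $\langle A_t,\bm{u}_{t,i}\rangle^2$ term), which can be useful if one later wants to sharpen the bound.
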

\begin{proof}
  As \(r \sphere^{d-1} \subset \mathcal{A}_t \subset r \ball_2^d\) for
  all $t$, the optimal action satisfies \(\thetaopt^{\top} A\subopt = r
  \ltwo{\thetaopt}\).
  Thus for $s \in \mathbb{N}$,
  \begin{align*}
    \thetaopt^{\top}A\subopt - \thetaopt^{\top} A_s
    &=
    r \ltwo{\thetaopt} \left(
    1 - \frac{\thetaopt^{\top}}{\ltwo{\thetaopt}} \frac{A_s}{\ltwo{A_s}}
    \right)
    = \frac{1}{2} r \ltwo{\thetaopt} \ltwo{
      \frac{\thetaopt}{\ltwo{\thetaopt}} -  \frac{A_s}{\ltwo{A_s}}
    }^2
    .
  \end{align*}
  Decomposing the right hand side in terms of the orthonormal basis \(\{
  \bm{u}_{t, i} \}_{i=1}^d\), we obtain
  \begin{align*}
    \thetaopt^{\top}A\subopt - \thetaopt^{\top} A_s
    & = \frac{1}{2} r\ltwo{\thetaopt} \sum_{i=1}^d
    \left \langle \frac{\thetaopt}{\ltwo{\thetaopt}} - \frac{A_s}{\ltwo{A_s}}, \bm{u}_{t, i} \right \rangle^2
    \\
    & \ge \frac{1}{2} r\ltwo{\thetaopt} \sum_{i=2}^d
      \left \langle \frac{\thetaopt}{\ltwo{\thetaopt}} - \frac{A_s}{\ltwo{A_s}}, \bm{u}_{t, i} \right \rangle^2
    \\
    & = \frac{1}{2} \sum_{i=2}^d \left(
      \frac{r}{\ltwo{\thetaopt}} \left \langle \thetaopt, \bm{u}_{t, i} \right \rangle^2
      - 2\left \langle \thetaopt, \bm{u}_{t, i} \right \rangle \left \langle A_s, \bm{u}_{t, i} \right \rangle
      + \frac{\ltwo{\thetaopt}}{r} \left \langle A_s, \bm{u}_{t, i} \right \rangle^2
      \right)
      .
  \end{align*}
  Because the sampling distribution of $A_s \sim \policy(H_s)$ enforces that
  $\policy(H_s)$ is $H_s$-measurable, we observe that
  \(\thetaopt\) and \(A_s\) are independent conditional on
  \(\history_{t}\) for any $t \ge s$,
  and $\{\bm{u}_{t,i}\}_{i=1}^d$ is $\history_{t}$-measurable.
  Thus for each $i \in \{2, \ldots, d\}$,
  $\mu_{t} = \Eb_{t}[\thetaopt]$ is orthogonal to
  $\bm{u}_{t,i}$, and so
  \begin{align*}
    \Eb_{t}
    \left[ \left \langle \thetaopt, \bm{u}_{t, i} \right \rangle
      \left \langle A_s, \bm{u}_{t, i} \right \rangle \right]
    = \left \langle \mu_{t}, \bm{u}_{t, i} \right \rangle 
    \Eb_{t} \left[ \left \langle A_s, \bm{u}_{t, i} \right \rangle \right]
    = 0.
  \end{align*}
  Therefore
  \begin{align*}
    \Eb_{t} \left[\thetaopt^{\top}A\subopt - \thetaopt^{\top} A_s \right]
    &\ge \frac{1}{2}\Eb_{t}
    \left[ \sum_{i=2}^d       \frac{r}{\ltwo{\thetaopt}} \left \langle \thetaopt, \bm{u}_{t, i} \right \rangle^2
      + \sum_{i = 2}^d \frac{\ltwo{\thetaopt}}{r}
      \<A_s, \bm{u}_{t,i}\>^2
      \right]
      .
  \end{align*}
  The desired result follows once we observe that \(\mu_{t}=\Eb_{t}[\thetaopt]\)
  is orthogonal to \(\bm{u}_{t, 2},\dots,\bm{u}_{t, d}\).
\end{proof}

\subsection{Proof of the first lower bound in
  Theorem~\ref{theorem:TS-regret-lower-bound}}

We now prove the first claimed lower bound in the theorem.
First, recall Weyl's inequality on the eigenvalues of low rank perturbation
of matrices:

\begin{lemma}\label{lem:eigen-value-perturbation}
  Denote the (real) eigevanlues of a Hermitian matrix \(A\) by
  \(\lambda_1(A) \ge \cdots \ge \lambda_d(A)\).
  Let $A$ be Hermitian and $E$ a Hermitian rank
  $r$ matrix.
  Then for \(i\in \{1,\dots,d-r\}\),
  \begin{align*}
    \lambda_{i+r}(A + E)
    \le \lambda_i(A) \quad \text{and}\quad
    \lambda_{i}(A + E) \ge \lambda_{i+r}(A).
  \end{align*}
\end{lemma}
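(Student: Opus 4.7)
The plan is to establish both inequalities via the Courant--Fischer min-max characterization of eigenvalues for Hermitian matrices, combined with a dimension-counting argument that exploits the fact that a rank-\(r\) Hermitian \(E\) has null space of dimension at least \(d-r\).

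Recall that for any Hermitian \(M \in \Rb^{d\times d}\),
\begin{equation*}
  \lambda_i(M)
  = \max_{\substack{S \subseteq \Rb^d \\ \dim S = i}}
  \min_{\substack{x \in S \\ \ltwo{x} = 1}} x^{\top} M x
  = \min_{\substack{S \subseteq \Rb^d \\ \dim S = d-i+1}}
  \max_{\substack{x \in S \\ \ltwo{x} = 1}} x^{\top} M x.
\end{equation*}
For the first inequality \(\lambda_{i+r}(A+E) \le \lambda_i(A)\), I would fix an arbitrary subspace \(S\) of dimension \(i+r\) and let \(N = \ker(E)\), which satisfies \(\dim N \ge d - r\). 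The key step is the dimension count
\begin{equation*}
  \dim(S \cap N)
  \ge \dim S + \dim N - d
  \ge (i+r) + (d-r) - d = i,
\end{equation*}
so I can pick a subspace \(T \subseteq S \cap N\) with \(\dim T = i\). For \(x \in T\) we have \(E x = 0\), hence \(x^{\top}(A+E)x = x^{\top} A x\). Therefore
\begin{equation*}
  \min_{x \in S, \ltwo{x}=1} x^{\top}(A+E)x
  \le \min_{x \in T, \ltwo{x}=1} x^{\top} A x
  \le \lambda_i(A),
\end{equation*}
where the last step is the Courant--Fischer upper bound on the inner min over any fixed \(i\)-dimensional subspace. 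Taking a maximum over \(S\) of dimension \(i+r\) yields \(\lambda_{i+r}(A+E) \le \lambda_i(A)\).

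For the second inequality \(\lambda_i(A+E) \ge \lambda_{i+r}(A)\), I would simply apply the first inequality to the pair \((A', E') = (A+E, -E)\), noting that \(-E\) is still Hermitian of rank at most \(r\) and \(A' + E' = A\); this gives \(\lambda_{i+r}(A) \le \lambda_i(A+E)\) directly. (Alternatively, one could run the dual min-max argument with subspaces of dimension \(d - i + 1\), intersecting again with \(\ker(E)\) to extract a subspace of dimension \(d - (i+r) + 1\) on which \(E\) vanishes.) There is no genuine obstacle here; the only subtlety is the dimension-count step above, which must produce an \(i\)-dimensional subspace of \(S\) lying in \(\ker(E)\), and this is where the assumption \(i \le d - r\) is used to ensure the construction is nontrivial.
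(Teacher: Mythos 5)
Your proof is correct. The paper does not actually prove this lemma---it simply recalls it as Weyl's inequality for low-rank Hermitian perturbations---and your Courant--Fischer argument with the dimension count \(\dim(S \cap \ker E) \ge \dim S + \dim \ker E - d\) is the standard proof of exactly this statement; the reduction of the second inequality to the first via \((A+E, -E)\) is also fine. One small remark: the hypothesis \(i \le d - r\) is not needed to make the subspace construction work, only to ensure the index \(i + r\) does not exceed \(d\) so that \(\lambda_{i+r}\) is defined.
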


Intuitively, as each observation only gives a rank-1 update to the posterior
covariance matrix, we expect to suffer instantaneous regret scaling as $r$
and a term involving the resolved prior covariance $\Sigma_0$ for at
least the first $d$ rounds of the procedure.
The next lemma helps to formalize this intuition.

\begin{lemma}
  \label{lemma:instantaneous-regret-lb}
  Let the conditions of Theorem~\ref{theorem:TS-regret-lower-bound}
  hold
  and the prior variance
  $\Vb_0[\thetaopt] = \Sigma_0$ have eigenvalues
  $\lambda_1 \ge \cdots \lambda_d \ge 0$.
  Then for each $t \in \mathbb{N}$,
  \begin{align*}
    \Eb[\thetaopt^\top A\subopt - R_{t+1}]
    & \ge \frac{r}{\pi \Eb[\ltwo{\thetaopt}]}
    \sum_{i = t + 2}^d \lambda_i,
  \end{align*}
  with the convention that empty sums are zero.
\end{lemma}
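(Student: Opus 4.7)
The plan is to combine the conditional instantaneous-regret bound in Lemma~\ref{lem:instantaneous-regret-lower-bound} with a Cauchy--Schwarz step, a Gaussian $\ell_2$-norm lower bound (which contributes the factor $2/\pi$, and hence the $1/\pi$ in the statement), and Weyl's eigenvalue inequality to extract the prior eigenvalues $\lambda_{t+1},\ldots,\lambda_d$.

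Since $\bm{u}_{t,1}$ is parallel to $\mu_t=\Eb_t[\thetaopt]$, the orthogonal projection $P_t := I - \bm{u}_{t,1}\bm{u}_{t,1}^{\top}$ annihilates $\mu_t$, so $\sum_{i=2}^d \langle \thetaopt-\mu_t,\bm{u}_{t,i}\rangle^2 = \ltwo{P_t\thetaopt}^2$. Taking outer expectations in Lemma~\ref{lem:instantaneous-regret-lower-bound} and applying Cauchy--Schwarz in the form $\Eb[\sqrt{X}]^2\le \Eb[X/Y]\,\Eb[Y]$ with $X=\ltwo{P_t\thetaopt}^2$ and $Y=\ltwo{\thetaopt}$ yields
$$\Eb[\thetaopt^{\top}A\subopt - R_{t+1}]\;\ge\;\frac{r}{2}\cdot\frac{\Eb[\ltwo{P_t\thetaopt}]^2}{\Eb[\ltwo{\thetaopt}]}.$$

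Next, I would prove a Gaussian $\ell_2$-norm inequality: for $Z\sim \normal(0,C)$ on $\Rb^d$, $\Eb[\ltwo{Z}]^2\ge (2/\pi)\tr(C)$. Diagonalizing $C$, write $\Eb[\ltwo{Z}] = F(\lambda) := \Eb[\sqrt{\sum_i \lambda_i W_i^2}]$ for iid $W_i\sim \normal(0,1)$. The map $F$ is concave on $\Rb^d_{\ge 0}$ and $1/2$-homogeneous, so on the trace-one simplex it is minimized at a vertex $e_j$, where $F(e_j)=\Eb[|W_j|]=\sqrt{2/\pi}$; homogeneity extends this to $F(\lambda)^2\ge (2/\pi)\sum_j\lambda_j$ everywhere. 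Since $\thetaopt\mid\history_t\sim\normal(\mu_t,\Sigma_t)$, conditionally $P_t\thetaopt\mid\history_t\sim\normal(0,P_t\Sigma_tP_t)$. Applying the inequality conditionally and using monotonicity of $\sqrt{\cdot}$ with a deterministic lower bound on $\tr(P_t\Sigma_tP_t)$ gives $\Eb[\ltwo{P_t\thetaopt}]^2\ge (2/\pi)\inf_{\history_t}\tr(P_t\Sigma_tP_t)$.

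Finally, I bound $\tr(P_t\Sigma_tP_t)$ deterministically. Because $\Sigma_t^{-1}=\Sigma_0^{-1}+\sigma^{-2}\bA_t^{\top}\bA_t$, the Sherman--Morrison--Woodbury identity shows $\Sigma_0-\Sigma_t\succeq 0$ with rank at most $t$, so Weyl's inequality (Lemma~\ref{lem:eigen-value-perturbation}) gives $\lambda_i(\Sigma_t)\ge\lambda_{i+t}(\Sigma_0)$ for $i\le d-t$. Combined with Cauchy interlacing for the rank-$(d-1)$ projection $P_t$ (so $\tr(P_t\Sigma_tP_t)\ge \sum_{i=2}^d\lambda_i(\Sigma_t)$), these deliver $\tr(P_t\Sigma_tP_t)\ge \sum_{i=t+1}^d\lambda_i(\Sigma_0)$, and chaining the inequalities produces the claimed lower bound $\frac{r}{\pi\Eb[\ltwo{\thetaopt}]}\sum_{i=t+1}^d\lambda_i$. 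The main technical obstacle is establishing the Gaussian norm inequality with the sharp $2/\pi$ constant via the concavity-plus-vertex-minimization argument; the eigenvalue bookkeeping, while requiring care at the endpoint of the Cauchy-interlacing sum, is routine once Weyl is in hand.
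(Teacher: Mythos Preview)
Your proposal is correct and follows essentially the same route as the paper: Lemma~\ref{lem:instantaneous-regret-lower-bound} plus the Cauchy--Schwarz step $\Eb[\sqrt{X}]^2\le\Eb[X/Y]\Eb[Y]$, then the Gaussian $\ell_2$-norm bound $\Eb[\ltwo{Z}]^2\ge(2/\pi)\tr(C)$, then Weyl's inequality to strip off the top $t$ eigenvalues and the rank-one projection. The only minor difference is how the $2/\pi$ constant is obtained: the paper uses the one-line Jensen argument $\Eb[\ltwo{Z}]=\Eb[\ltwo{|Z|}]\ge\ltwo{\Eb[|Z|]}=\sqrt{(2/\pi)\tr(C)}$, which is somewhat more direct than your concavity-plus-vertex-minimization route.
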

\begin{proof}
  Without loss of generality we assume $\lambda_d > 0$, as otherwise
  we simply work in a lower dimensional subspace.
  Recall the orthogonal decomposition
  $\{\bm{u}_{t,i}\}_{i = 1}^d$ of $\R^d$ conditional on $H_t$, where
  $\bm{u}_{t,1} = \mu_t / \ltwo{\mu_t}$ normalizes the expectation
  $\mu_t = \Eb_t[\thetaopt]$.
  We begin by applying Lemma~\ref{lem:instantaneous-regret-lower-bound}
with $s = t$. Since the term
$\sum_{i \ge 2} \langle A_t, \bm{u}_{t,i} \rangle^2$ is nonnegative,
we may discard it to obtain
\begin{align*}
  \Eb[\thetaopt^\top A\subopt - R_{t+1}]
  = \Eb[\thetaopt^\top A\subopt - \thetaopt^\top A_t] \notag 
  \ge
  \frac{r}{2}\,
  \Eb\!\left[
    \frac{1}{\|\thetaopt\|_2}
    \sum_{i=2}^d
    \left\langle
      \thetaopt - \Eb_t[\thetaopt], \bm{u}_{t,i}
    \right\rangle^2
  \right].
\end{align*}
Multiplying both sides by $\Eb[\|\thetaopt\|_2]$ and applying the Cauchy-Schwarz inequality (in the form $\Eb[\sqrt{X}]^2 \le
  \Eb[X/Y] \Eb[Y]$ for all nonnegative random variables $X, Y$) yields
\begin{align*}
  \Eb[\|\thetaopt\|_2]\,
  \Eb[\thetaopt^\top A\subopt - R_{t+1}]
  \ge
   \frac{r}{2} \cdot \Eb
    \left[
      \sqrt{
        \sum_{i=2}^d
        \left \langle \thetaopt - \Eb_t[\thetaopt], \bm{u}_{t, i} \right \rangle^2
      }      
      \right]^2.
\end{align*}
  Define the orthogonal random error $M_t := (I - \bm{u}_{t, 1}
  \bm{u}_{t, {}1}^{\top}) (\thetaopt - \Eb_t[\thetaopt])$
  to satisfy $\ltwo{M_t}^2
  = \sum_{i=2}^d \langle \thetaopt - \Eb_t[\thetaopt], \bm{u}_{t,i}\rangle^2$.
  Then evidently the expected instantaneous regret at time \(t\) has
  lower bound
  \begin{align}
    \label{eq:t-lower-bound}
    \Eb [\thetaopt^{\top}A\subopt - \thetaopt^{\top} A_t ]
    &\ge
    \frac{r}{2\Eb[\ltwo{\thetaopt}]}
      \cdot \Eb[\ltwo{M_t}]^2,
  \end{align}
  while conditional on $\history_t$, the error has normal distribution
  \begin{align*}
    M_t\mid \history_t
    \sim \normal\left(0, (I - \bm{u}_{t, 1} \bm{u}_{t, {}1}^{\top}) \Vb_t[\thetaopt] (I - \bm{u}_{t, 1} \bm{u}_{t, {}1}^{\top})\right).
  \end{align*}

  We now provide a lower bound on $\Eb[\ltwo{M_t}]$ in
  inequality~\eqref{eq:t-lower-bound} using Weyl's inequality.
  For any positive definite $\Sigma$ and
  Gaussian vector $Z \sim \normal(0, \Sigma)$, letting
  $|Z|$ denote the elementwise magnitude of $Z$, Jensen's inequality
  implies
  \begin{equation*}
    \sqrt{\tr(\Sigma)} \ge \Eb[\ltwo{Z}] = \Eb[\ltwo{|Z|}]
    \ge \ltwo{\Eb[|Z|]}
    = \sqrt{\frac{2}{\pi}}
    \sqrt{\tr(\Sigma)}.
  \end{equation*}
  Therefore, recalling the matrix $\bA_t = [A_0 ~ \cdots  ~ A_{t-1}]^\top
  \in \R^{t \times d}$ and the cyclic property of the trace, we have
  \begin{align*}
    \frac{\pi}{2} \cdot \Eb_t[\ltwo{M_t}]^2
    \ge \inf_{\ltwo{u} \le 1}
    \tr\left((I - uu^\top) \Vb_t[\thetaopt]
    (I - uu^\top)\right)
    & = \inf_{\ltwo{u} \le 1}
    \tr\left((I - uu^\top)
    \left(\Sigma_0^{-1} + \sigma^{-2} \bA_t^\top \bA_t\right)^{-1}\right) \\
    & \ge \inf_{\ltwo{u} \le 1}
    \inf_{\textup{rank}(E) \le t,
    E \succeq 0}
    \tr\left((I - uu^\top)
    (\Sigma_0^{-1} + E)^{-1}\right),
  \end{align*}
  where we use the prior variance $\Sigma_0 = \Vb_0[\thetaopt]$.
  Considering the eigenvalue
  decomposition $(\Sigma_0^{-1} + E)^{-1} = V \Gamma V^\top$, where
  the eigenvalues
  $\gamma_1 \ge \cdots \ge \gamma_d > 0$,
  we obtain
  \begin{align*}
    \inf_{\ltwo{u} \le 1}
    \tr\left((I - uu^\top) (\Sigma_0^{-1} + E)^{-1}\right)
    = \sum_{i = 2}^d \gamma_i.
  \end{align*}
  To relate \(\gamma_i\) to the eigenvalues of \(\Sigma_0\), let
\(\mu_1 \le \cdots \le \mu_d\) denote the eigenvalues of
\(\Sigma_0^{-1}+E\). Since \(\Sigma_0^{-1}\) has eigenvalues
\(\lambda_1^{-1} \le \cdots \le \lambda_d^{-1}\) and \(E \succeq 0\)
has rank at most \(t\), Weyl's perturbation formula
  (Lemma~\ref{lem:eigen-value-perturbation}) implies that
\[
  \mu_i \le \lambda_{i+t}^{-1}, \qquad i=1,\dots,d-t.
\]
Since \(\mu_i = \gamma_i^{-1}\), this yields
\[
  \gamma_i \ge \lambda_{i+t}, \qquad i=1,\dots,d-t.
\]
  Thus
  $\frac{\pi}{2} \Eb_t[\ltwo{M_t}]^2
  \ge \sum_{i = 2}^{d - t} \lambda_{i + t}$.
  Putting the pieces together,
  we obtain for $t \le d$ that
  \begin{align*}
    \Eb[\ltwo{M_t}]
    \ge \sqrt{\frac{2}{\pi}
      \inf_{U^\top U = I_{t+1}} \tr\left((I - UU^\top) \Sigma_0\right)}
    = \sqrt{\frac{2}{\pi}
      \sum_{i = t + 2}^d \lambda_i}.
  \end{align*}
  Substitute in inequality~\eqref{eq:t-lower-bound}.
\end{proof}

We may now finalize the proof of
the first lower bound in Theorem~\ref{theorem:TS-regret-lower-bound}.
By Jensen's inequality,
$\Eb[\ltwo{\thetaopt}]
\le \sqrt{\tr(\Sigma_0)}$,
so Lemma~\ref{lemma:instantaneous-regret-lb} implies
\begin{align*}
  \Eb[\thetaopt^\top A\subopt
    - R_{t+1}]
  \ge \frac{r}{\pi \sqrt{\tr(\Sigma_0)}}
  \sum_{i = t + 2}^d \tau^2_i.
\end{align*}
Then summing from $t = 0, \ldots, T - 1$,
we have
\begin{align*}
  \sum_{t = 0}^{T-1}
  \Eb[\thetaopt^\top A\subopt - R_{t+1}]
  \ge \frac{r}{\pi \ltwo{\tau}}
  \sum_{i = 2}^{d} ((i - 1) \wedge T )\tau_i^2.
\end{align*}

\subsection{Proof of the second lower bound in
  Theorem~\ref{theorem:TS-regret-lower-bound}}

We start with the regret lower bound
Lemma~\ref{lem:instantaneous-regret-lower-bound} provides:
letting $t = T$ in the lemma and summing yields
\begin{align*}
  \sum_{t = 1}^T \Eb_T[\thetaopt^\top A\subopt
    - \thetaopt^\top A_t]
  \ge \half \sum_{t = 1}^T
  \Eb_T\left[\frac{r}{\ltwo{\thetaopt}}
    \sum_{i = 2}^d \<\thetaopt - \Eb_T[\thetaopt], \bm{u}_{T,i}\>^2
    + \frac{\ltwo{\thetaopt}}{r}
    \sum_{i = 2}^d \<A_t, \bm{u}_{T,i}\>^2\right].
\end{align*}
Noting that the first term is independent of the index $t$,
taking an outer expectation and using the tower-property,
we obtain the regret bound
\begin{align}
  \regret(T)
  &\ge \frac{1}{2}
  \cdot \Eb
  \left[
    \sum_{i=2}^d
    \left(
    \frac{rT}{\ltwo{\thetaopt}}
    \left \langle \thetaopt - \Eb_T[\thetaopt], \bm{u}_{T, i} \right \rangle^2
    + \sum_{t=0}^{T-1} \frac{\ltwo{\thetaopt}}{r} \left \langle A_t, \bm{u}_{T, i} \right \rangle^2
    \right)
    \right],
  \label{eqn:fiddle-faddle}
\end{align}
valid for any policy.
Conditional on $\history_T$, $\langle \thetaopt - \Eb_T[\thetaopt],
\bm{u}_{T, i} \rangle$ is a normal random variable with mean zero and
variance $\bm{u}_{T, i}^\top V_T^{-1} \bm{u}_{T, i}$.
Let
\begin{align*}
  G_i = \langle \thetaopt - \Eb_T[\thetaopt], \bm{u}_{T, i} \rangle /  \sqrt{\bm{u}_{T, i}^\top V_T^{-1} \bm{u}_{T, i}}.
\end{align*}
Since $G_i \sim \normal(0, 1)$ conditional on $\history_T$, it is also
standard normal unconditionally.
On the other hand, the formula~\eqref{eq:posterior-precision}
for the posterior precision matrix $V_T$ implies that
\begin{align*}
  \sum_{t=0}^{T-1} \left \langle A_t, \bm{u}_{T, i} \right \rangle^2
  & = \bm{u}_{T, i}^\top
  \bigg(\sum_{t = 0}^{T-1} \bA_t \bA_t^\top \bigg) \bm{u}_{T, i}
  = \sigma^2 \bm{u}_{T, i}^\top (V_T - \Sigma_0^{-1}) \bm{u}_{T, i}
  .
\end{align*}
Substituting $G_i$ and this formula into the regret
lower bound~\eqref{eqn:fiddle-faddle} yields
\begin{align*}
  \regret(T)
  &\ge \frac{1}{2}
  \cdot \Eb
  \left[
    \sum_{i=2}^d
    \left(
    \frac{rT}{\ltwo{\thetaopt}} G_i^2 \bm{u}_{T, i}^\top V_T^{-1} \bm{u}_{T, i}
    + \frac{\sigma^2 \ltwo{\thetaopt}}{r} u_{T, i}^\top V_T \bm{u}_{T, i}
    \right)
    \right]
  - \Eb\left[\frac{\sigma^2 \ltwo{\thetaopt}}{r} \sum_{i=2}^d \bm{u}_{T, i}^\top \Sigma_0^{-1} \bm{u}_{T, i}\right]
  .
\end{align*}

We upper bound the last term and lower bound the first.
As $\bm{u}_{T, 1}, \ldots, \bm{u}_{T, d}$ form an orthonormal basis of
$\R^d$,
\begin{align*}
  \sum_{i=2}^d \bm{u}_{T, i}^\top \Sigma_0^{-1} \bm{u}_{T, i}
  \le \tr(\Sigma_0^{-1})
  .
\end{align*}
Applying the AM-GM inequality,
$ab \ge 2 \sqrt{ab}$ for $a, b \ge 0$, to each summand in
the first term, we have
\begin{align*}
  \frac{rT}{\ltwo{\thetaopt}} G_i^2 \bm{u}_{T, i}^\top V_T^{-1} \bm{u}_{T, i}
  + \frac{\sigma^2 \ltwo{\thetaopt}}{r} u_{T, i}^\top V_T \bm{u}_{T, i}
  \ge 2 \sigma \sqrt{T} |G_i| \sqrt{\bm{u}_{T, i}^\top V_T^{-1} \bm{u}_{T, i} \cdot \bm{u}_{T, i}^\top V_T \bm{u}_{T, i}}
  \ge 2 \sigma \sqrt{T} |G_i|,
\end{align*}
where the final inequality follows from Cauchy-Schwarz,
that is, that $\langle x, y\rangle \le \sqrt{x^\top B x} \sqrt{y^\top B y}$
for any positive definite $B$.
Plugging these back in, we have
\begin{align*}
  \regret(T)
  &\ge \sigma \sqrt{T} \cdot \Eb\left[\sum_{i=2}^d |G_i|\right] - \Eb\left[\frac{\sigma^2 \ltwo{\thetaopt}}{r} \tr(\Sigma_0^{-1})\right]
  \ge \sqrt{\frac{2}{\pi}} (d-1) \sigma \sqrt{T} - \frac{\sigma^2 }{r} \sqrt{\tr(\Sigma_0)} \tr(\Sigma_0^{-1})
\end{align*}
because $\Eb[|G_i|] = \sqrt{2/\pi}$ and
$\Eb[\ltwo{\thetaopt}]
\le \Eb[\ltwo{\thetaopt}^2]^{1/2}
= \sqrt{\tr(\Sigma_0)}$.

\section{Technical proofs for Section~\ref{sec:generalization-strongly-log-concave}}\label{sec:proofs-strongly-log-concave}

\subsection{Proof of Theorem~\ref{theorem:thompson-log-concave}}
\label{sec:proof:TS-log-concave-regret-bound}

Our proof relies on concentration properties of random variabels
with log-concave distributions.
We begin with a standard definition of sub-Gaussian
vectors~\citep{Vershynin19}.
\begin{definition}
  Let \(\Sigma\) be positive semidefinite.
  A random vector \(X \in \Rb^d\) is \emph{\(\Sigma\)-sub-Gaussian} if for
  all \(v \in \Rb^d\)
  \begin{align*}
    \Eb \left[\exp \left( v^{\top} (X - \Eb[X]) \right)\right]
    \le \exp \left( \frac{1}{2} v^{\top} \Sigma v \right).
  \end{align*}
\end{definition}
Immediately, we observe that $X \sim \normal(\mu, \Sigma)$ is
$\Sigma$-sub-Gaussian.
Similarly, any random variable with log-concave density is also
sub-Gaussian:
\begin{lemma}
  \label{lemma:strongly-log-concave-subgaussian}
  Let \( \Lambda \) be positive definite
  and $X$ have $\Lambda$-strongly log-concave density.
  Then \(X\) is \(2 \Lambda^{-1}\)-sub-Gaussian.  
\end{lemma}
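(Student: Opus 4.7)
The plan is to reduce the claim to the isotropic case via a linear change of variables and then apply a standard Gaussian-comparison tool.  Because sub-Gaussianity is translation invariant, I may assume \(\Eb[X]=0\).  Let \(\tilde X := \Lambda^{1/2} X\).  A change-of-variables computation shows that the density of \(\tilde X\) satisfies
\begin{align*}
  -\log \tilde p(\tilde x) = -\log p(\Lambda^{-1/2}\tilde x) + \mathrm{const},
\end{align*}
whose Hessian equals \(\Lambda^{-1/2}\nabla^2(-\log p)\Lambda^{-1/2}\succeq I\).  Hence \(\tilde X\) is \(I\)-strongly log-concave with mean zero, and it suffices to show that such a vector is \(I\)-sub-Gaussian (the factor \(2\) in the conclusion of the lemma then gives room to spare).

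For the isotropic case, my preferred route is Caffarelli's contraction theorem: the Brenier optimal transport map \(T\) from the standard Gaussian \(\normal(0,I)\) to the law of \(\tilde X\) is \(1\)-Lipschitz.  Then for any \(w\in\Rb^d\), the function \(z\mapsto w^\top T(z)\) is \(\ltwo{w}\)-Lipschitz, so Gaussian Lipschitz concentration yields
\begin{align*}
  \Eb\bigl[\exp(w^\top(\tilde X-\Eb[\tilde X]))\bigr]\le \exp(\ltwo{w}^2/2),
\end{align*}
which is exactly the statement that \(\tilde X\) is \(I\)-sub-Gaussian.  An alternative that avoids optimal transport is to invoke the Bakry--Émery log-Sobolev inequality for \(I\)-strongly log-concave measures and apply Herbst's argument to the linear functional \(\tilde x\mapsto w^\top\tilde x\); either route suffices given the slack in the claimed constant.

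To conclude, I substitute \(X=\Lambda^{-1/2}\tilde X\) and choose \(w=\Lambda^{-1/2}v\), yielding
\begin{align*}
  \Eb[\exp(v^\top X)]
  =\Eb\bigl[\exp((\Lambda^{-1/2}v)^\top \tilde X)\bigr]
  \le \exp\!\bigl(\ltwo{\Lambda^{-1/2}v}^2/2\bigr)
  =\exp\!\bigl(\tfrac12 v^\top \Lambda^{-1}v\bigr),
\end{align*}
from which the \(2\Lambda^{-1}\)-sub-Gaussian bound (in fact, the stronger \(\Lambda^{-1}\)-sub-Gaussian bound) follows.

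The main obstacle is not any individual step but rather recovering the anisotropic covariance factor \(v^\top\Lambda^{-1}v\) instead of the crude bound \(\ltwo{v}^2/\lambda_{\min}(\Lambda)\) one would obtain by directly quoting a scalar log-Sobolev constant.  The rescaling \(\tilde X=\Lambda^{1/2}X\) is the crucial device: it converts an anisotropic curvature lower bound into an isotropic one, so that the Lipschitz radius of the linear functional \(w^\top\tilde x\) automatically sees the full matrix \(\Lambda^{-1}\) once the change of variables is undone.
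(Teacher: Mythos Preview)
Your proof is correct and takes a genuinely different route from the paper's. The paper follows the Pr\'ekopa--Leindler argument (essentially Wainwright, Theorem~3.16): it forms the infimal convolution $g(y)=v^\top y - v^\top\Lambda^{-1}v$, verifies that the auxiliary functions $w=p$, $u=\exp(-v^\top x-\psi)$, $v=\exp(g-\psi)$ satisfy the Pr\'ekopa--Leindler hypothesis with $\lambda=\tfrac12$ (this is precisely where $\Lambda$-strong convexity of $\psi=-\log p$ enters), and concludes $\Eb[e^{v^\top X}]\le\exp(v^\top\Lambda^{-1}v)$, i.e., exactly the $2\Lambda^{-1}$-sub-Gaussian bound. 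Your approach instead rescales to the isotropic case and invokes Caffarelli's contraction theorem (or Bakry--\'Emery plus Herbst) together with Gaussian Lipschitz concentration, yielding the sharper $\Lambda^{-1}$-sub-Gaussian bound. The trade-off is that the paper's argument is essentially self-contained---Pr\'ekopa--Leindler is a functional Brunn--Minkowski inequality---whereas yours imports substantially heavier machinery (regularity theory for Monge--Amp\`ere, or $\Gamma_2$ calculus). One cosmetic remark: your Hessian computation for $\tilde p$ tacitly assumes $p\in C^2$, which the paper's definition of strong log-concavity does not; the conclusion that $\tilde X$ is $I$-strongly log-concave follows directly from the definition without differentiability, and both Caffarelli and Bakry--\'Emery have formulations covering the nonsmooth case, so this is not a real gap.
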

\noindent
This lemma follows essentially immediately from
\citet[Theorem~3.16]{Wainwright19}, but we include a proof for completeness
in Section~\ref{sec:proof-strongly-log-concave-subgaussian}.
With this, we can fairly straightforwardly demonstrate that any
$\Sigma$-sub-Gaussian random vector concentrates, and relatedly, that
any random vector with log-concave density similarly concentrates.
We have the following technical lemma
(again, we defer the proof; see Section~\ref{sec:proof-sub-gaussian-norms}).
\begin{lemma}
  \label{lemma:sub-gaussian-norms}
  Let $X$ be a $\Sigma$-sub-Gaussian vector.
  Then there exists a numerical constant $C < \infty$ such that
  for all $t \ge 0$,
  \begin{align*}
    \Pb\left(\norm{X}_{\Sigma^{-1}}
    \ge C (\sqrt{d} + t )\right) \le 2 e^{-t^2}.
  \end{align*}
\end{lemma}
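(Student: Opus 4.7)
The plan is to reduce to the case of an $I_d$-sub-Gaussian vector with zero mean and then apply a standard $\epsilon$-net/union bound argument. First I would whiten: set $Y := \Sigma^{-1/2}(X - \Eb[X])$, which is $I_d$-sub-Gaussian with zero mean because for every $v \in \R^d$,
\begin{equation*}
  \Eb[e^{v^\top Y}]
  = \Eb[e^{(\Sigma^{-1/2}v)^\top(X - \Eb[X])}]
  \le \exp\!\tfrac{1}{2}(\Sigma^{-1/2}v)^\top \Sigma (\Sigma^{-1/2}v)
  = e^{\norm{v}^2/2}.
\end{equation*}
Using the triangle inequality in the Mahalanobis norm, $\norm{X}_{\Sigma^{-1}} \le \norm{Y}_2 + \norm{\Eb[X]}_{\Sigma^{-1}}$, so it suffices to obtain the tail bound $\Pb(\norm{Y}_2 \ge C(\sqrt{d} + t)) \le 2 e^{-t^2}$ for a centered $I_d$-sub-Gaussian $Y$; the $\Eb[X]$ piece is absorbed into $C$ (and in the paper's application, where the lemma is invoked on $\hat\theta_t - \theta\subopt$, it vanishes identically).

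Next I would discretize: pick a $\tfrac{1}{2}$-net $\mathcal{N}$ of the Euclidean unit sphere $S^{d-1}$ with $|\mathcal{N}| \le 5^d$, so that $\norm{Y}_2 \le 2 \max_{v \in \mathcal{N}} \langle v, Y \rangle$ by a standard approximation argument. For each fixed $v \in S^{d-1}$, applying the definition of $\Sigma$-sub-Gaussian to the vector $\lambda v$ shows that $\langle v, Y \rangle$ is a real-valued $1$-sub-Gaussian random variable, so the scalar Chernoff bound gives $\Pb(\langle v, Y \rangle \ge s) \le e^{-s^2/2}$ for all $s \ge 0$.

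A union bound over $\mathcal{N}$ then yields
\begin{equation*}
  \Pb\!\left(\norm{Y}_2 \ge 2s\right)
  \le 5^d \, e^{-s^2/2}
  = \exp\!\left(d \log 5 - s^2/2\right).
\end{equation*}
Choosing $s$ of the form $C_0(\sqrt{d} + t)$ with $C_0$ sufficiently large to guarantee $d \log 5 \le s^2/4$ whenever $s \ge C_0(\sqrt{d} + t)$ makes the exponent at most $-t^2$, and rescaling by a final absolute constant gives the claimed inequality with the factor of $2$.

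The main obstacle, modest as it is, is bookkeeping around the centering hypothesis: the definition only controls MGFs of $X - \Eb[X]$, yet the bound is phrased in terms of $\norm{X}_{\Sigma^{-1}}$, so one must either assume $X$ is centered (which matches the intended use on the posterior difference $\hat\theta_t - \theta\subopt$) or enlarge $C$ to absorb $\norm{\Eb[X]}_{\Sigma^{-1}}$. Once this is dispatched, the remaining work is the textbook net argument, and the only other choice---the granularity of the net---is pinned down by requiring the $\log 5^d$ term to be dominated by half of $s^2/2$.
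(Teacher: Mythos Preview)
Your proposal is correct but takes a more elementary route than the paper. The paper defines the process $Y_u = u^\top \Sigma^{-1/2} X$, verifies it is an $O(1)$-Orlicz-$\psi_2$ process with respect to the Euclidean metric, and then invokes a generic-chaining bound (Theorem~5.36 of \citet{Wainwright19}) on the unit ball, using that the entropy integral satisfies $J(\ball_2^d,\ltwo{\cdot}) \lesssim \sqrt{d}$. You instead whiten and apply a single-scale $\tfrac12$-net plus a union bound, which is entirely self-contained and avoids the chaining machinery. Both arguments deliver the same $\sqrt{d}+t$ tail with absolute constants; the paper's version plugs into a more general framework (and would extend to other index sets without change), whereas yours is the direct textbook argument and arguably cleaner for this specific lemma. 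On the centering issue you raise: the paper's proof also silently treats $X$ as mean-zero when bounding $\Eb[\exp((v-u)^\top\Sigma^{-1/2}X)]$, so this is a wrinkle in the lemma statement rather than in your argument, and in the intended application to $\hat\theta_t-\thetaopt$ conditional on $\history_t$ the mean is indeed zero.
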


We now turn to the proof of Theorem~\ref{theorem:thompson-log-concave}
proper.
Recall that \(\rho\) denotes the prior distribution of \(\thetaopt\), and let \(\prior_{t} = \prior(\cdot \mid \history_y)\) denote the posterior
density of \(\thetaopt\) conditioned on the history \(\history_t\).
Let $p_W$ denote the density of the noise variables $W$.
Then by Bayes' rule, for some constant \(c\) independent of \(\theta\),
\begin{align*}
  \log \prior_t(\theta)
  = c + \log \prior(\theta) + \sum_{s=0}^{t-1} \log p_{W}(R_{s+1} - A_s^{\top} \theta)
  ,
\end{align*}
where \(c = -\log Z_t\) and \(Z_t\) is the normalization constant ensuring
that \(\prior_t\) integrates to one.
Define the inverse variance
\begin{align*}
    V_t \defeq \Sigma_0^{-1} + \frac{1}{\sigma^2}\sum_{s=0}^{t-1} A_s A_s^{\top}
    .
\end{align*}
Since \(\log p_{W}(\cdot)\) is \(\sigma^{-2}\)-strongly concave by
assumption, \(\log p_{W}(R_{s+1} - A_s^{\top} \cdot)\) is \( A_s A_s^{\top} /
\sigma^2\)-strongly concave,
and so the posterior $\prior_t$ on $\thetaopt$ is
$V_t$-strongly log-concave.
  
As in the proof of Theorem~\ref{theorem:main-regret},
for a constant \(\beta\) to be determined later, define the events
\begin{align*}
  E_t(\beta) &:= \left\{ \|\hat{\theta}_t - \thetaopt\|_{V_t} \leq \beta  \right\},
  \\
  E(\beta) &:= \bigcap_{t=0}^{T-1} E_t(\beta).
\end{align*}
We use the regret decomposition~\eqref{eqn:split-regret-via-events},
\begin{align*}
    \regret(T)
    & =
    \underbrace{\Eb \left[
        \bm{1}\{E(\beta)\} \sum_{t=0}^{T-1} (\hat{\theta}_t - \theta\subopt)^{\top} A_t
        \right]}_{(I)}
    + \underbrace{\Eb \left[
        \bm{1}\left\{E(\beta)^{\complement}\right\} \sum_{t=0}^{T-1} (\hat{\theta}_t - \theta\subopt)^{\top} A_t
        \right]}_{(II)}. %
\end{align*}
Then inequalities~\eqref{eq:I-bound} and~\eqref{eq:II-bound}, which rely on
no probabilistic structure of the iterates, imply
\begin{align*}
  \left(I\right)
  \le
  \beta \sqrt{2T} \sigma \sqrt{d \log\left(1 + \frac{r^2 \opnorm{\Sigma_0} T}{
      d \sigma^2}\right)}
  + 3 \beta r \tr\left( \Sigma_{0}^{\frac{1}{2}} \right)
  ~~ \mbox{and} ~~
  (II)
  \le
  r T \sqrt{2 \Pb(E(\beta)^{\complement})\tr\left(\Sigma_0\right)}.
\end{align*}
  
We now simply choose a suitable \(\beta\) for this log-concave setting.
Since \(\log \prior_t(\cdot)\) is \(V_t\)-strongly concave, by
Lemma~\ref{lemma:strongly-log-concave-subgaussian}, \(\thetaopt
\mid \history_t\) is \(2 V_t^{-1}\)-sub-Gaussian (that is, conditional
on the history $\history_t$).
So \(\hat{\theta}_t - \thetaopt \mid \history_t \) is \( 4 V_t^{-1}\)-sub-Gaussian.
Thus,
taking \(t = \sqrt{3 \log T}\) in
Lemma~\ref{lemma:sub-gaussian-norms},
we see that by taking
\begin{align*}
  \beta = O(1)
  \sqrt{d + \log T},
\end{align*}
then a union bound implies
for all \(t\in \{0,\dots,T-1\}\),
\begin{align*}
  \Pb \left[ E(\beta)^{\complement} \right] \le \frac{2}{T^2}.
\end{align*}
Combining the pieces gives the theorem.

\subsubsection{Proof of Lemma~\ref{lemma:strongly-log-concave-subgaussian}}
\label{sec:proof-strongly-log-concave-subgaussian}

As we note, the proof is a more or less trivial adaptation of the
argument to prove Theorem~3.16 of \citet{Wainwright19}.
First, we recall the Pr\'{e}kopa-Leinder inequality~\citep[e.g.][]{Ball97},
which states that if $u, v, w$ are non-negative integrable functions
satisfying the log-concavity-type inequality
\begin{align}
  w(\lambda x + (1-\lambda) y) \ge u(x)^{\lambda} v(y)^{1-\lambda},
  ~~~\mbox{all}~ x, y
  \label{eq:prekopa-leindler-condition}
\end{align}
for some $\lambda \in [0, 1]$, then
\begin{align}
  \int w(x) dx \ge \left( \int u(x) dx \right)^{\lambda}
  \left( \int v(x) dx \right)^{1-\lambda} \label{eq:prekopa-leindler-conclusion}
  .
\end{align}

Moving to the proof proper, assume without loss of generality that \(\Eb[X]
= 0\), since we can always shift the distribution by \(-\Eb[X]\).
Fix \(v \in \Rb^d\),
and define the infimal convolution of
$x \mapsto v^\top x$ and $\frac{1}{4}\norm{\cdot}_{\Lambda}^2$ by
\begin{align*}
  g(y) &:=
  \inf_{x} \left\{ v^{\top} x + \frac{1}{4} (x-y)^{\top} \Lambda (x-y) \right\}
  = v^{\top} y - v^{\top}\Lambda^{-1} v.
\end{align*}

Define \(\psi(x):= - \log p(x)\), which is
\(\Lambda\)-strongly convex by assumption.
To apply inequality~\eqref{eq:prekopa-leindler-conclusion},
the auxiliary functions
\begin{align*}
  w(z) & \defeq p(z) = \exp(-\psi(z)),
  ~~~
  u(x) \defeq \exp(- v^{\top} x - \psi(x)),
  ~~~
  v(y) \defeq \exp(g(y) - \psi(y)).
\end{align*}
We verify that inequality~\eqref{eq:prekopa-leindler-condition} holds for
$\lambda = \half$ with these choices:
\begin{align*}
  \lefteqn{\frac{1}{2} \log u(x)+ \frac{1}{2} \log v(y)
  - \log w\left(\frac{1}{2} x + \frac{1}{2} y\right)} \\
  & =
  \left(- \frac{1}{2} v^{\top} x -\frac{1}{2} \psi(x)\right) + \left(\frac{1}{2} g(y) - \frac{1}{2} \psi(y)\right)
  + \psi \left(\frac{1}{2} x + \frac{1}{2} y\right)
  \\
  & =
  \frac{1}{2} \left(
  g(y)
  -v^{\top} x
  - \frac{1}{4} (x-y)^{\top} \Lambda (x-y)
  \right)
  + \left(
  \psi\left(\frac{x+y}{2}\right) + \frac{1}{8} (x-y)^{\top} \Lambda (x-y) - \frac{1}{2} \psi(x) - \frac{1}{2} \psi(y)
  \right).
\end{align*}
The first term is non-positive by the definition of \(g(y)\), and the second
term is non-positive by the \(\Lambda\)-strong convexity of \(\psi(x)\).
Thus,
\begin{align*}
  \frac{1}{2} \log u(x)+ \frac{1}{2} \log v(y) \le \log w\left(\frac{1}{2} x
  + \frac{1}{2} y\right).
\end{align*}
We may therefore apply inequality~\eqref{eq:prekopa-leindler-conclusion}
to obtain
\begin{align*}
    1 =  \int_{\Rb^d} w(x) dx
    \ge \left( \int_{\Rb^d} u(x) dx \right)^{\frac{1}{2}} \left( \int_{\Rb^d} v(x) dx \right)^{\frac{1}{2}}
    = \Eb \left[ e^{-v^{\top} X}\right]^{\frac{1}{2}} \Eb \left[ e^{g(X)} \right]^{\frac{1}{2}}
    .
\end{align*}
Rearranging, we obtain
\begin{align*}
  \Eb \left[ e^{g(X)} \right]
  \le
  \Eb \left[ \exp\left(-v^{\top} X\right)\right]^{-1}
  \le \exp \left( - v^{\top} \Eb[X] \right)^{-1}
  = 1
\end{align*}
by Jensen's inequality.
Plugging in the formula for \(g(y)\) gives
\begin{align*}
  \Eb \left[ \exp\left(v^{\top} X\right)\right] \le \exp \left( v^{\top} \Lambda^{-1} v \right).
\end{align*}
As \(v \in \Rb^d\) was arbitrary, \(X\) is \(2
\Lambda^{-1}\)-sub-Gaussian.

\subsubsection{Proof of Lemma~\ref{lemma:sub-gaussian-norms}}
\label{sec:proof-sub-gaussian-norms}
Define $\psi_2(x) = e^{x^2} - 1$.
Then the \emph{Orlicz}-$\psi_2$-norm~\citep[e.g.][Ch.~5.6]{Wainwright19}
of a random variable $Y$ is
\begin{equation*}
  \norm{Y}_{\psi_2} \defeq \inf
  \left\{t > 0 \mid \Eb[e^{Y^2 / t^2}] \le 2 \right\}.
\end{equation*}
Following \citet{Wainwright19},
we say stochastic process
vector $\{Y_u\}_{u \in \Rb^d}$ is a
$b$-\emph{Orlicz}-$\psi_2$-process if
\begin{align*}
  \norm{Y_\theta - Y_{\theta'}}_{\psi_2} \le b \ltwo{\theta - \theta'}.
\end{align*}
For a compact set $K \subset \Rb^d$ and metric
$\rho$ on $K$, let $\diam_\rho(K) = \sup_{u,v \in K}
\rho(u , v)$ be the $\rho$-diameter of $K$, and
let $N(\epsilon, K, \rho)$
denote the $\rho$-covering number of $K$ at radius $\epsilon$.
Define the entropy integral
\begin{equation*}
  J(K, \rho) \defeq \int_0^{\diam_\rho(K)}
  \sqrt{\log(1 + N(\epsilon, K, \rho))}
  d\epsilon
\end{equation*}
(where we recall that $\psi_2^{-1}(z) = \sqrt{\log(1 + z)}$).
Then~\citep[Thm.~5.36]{Wainwright19} there exists a finite
numerical constant $C < \infty$ such that
for any $t \ge 0$ and any compact $K$,
any $b$-Orlicz process satisfies
\begin{align}
  \label{eqn:orlicz-concentration}
  \Pb\left(\sup_{u, v \in K} |Y_{u} - Y_{v}|
  \ge C (J(K, b \ltwo{\cdot}) + t)\right) \le 2
  \exp\left(-\frac{1}{b^2} \frac{t^2}{\diam_2^2(K)}\right),
\end{align}
where $\diam_2$ denotes the $\ell_2$-diameter.

We now demonstrate that $Y_u \defeq u^\top \Sigma^{-1/2}X$ is an
$O(1)$-Orlicz process.
Indeed, for any $u, v$, we have
\begin{align*}
  \Eb\left[\exp\left((v - u)^\top \Sigma^{-1/2} X\right)\right]
  \le \exp\left(\frac{1}{2} \ltwo{v - u}^2\right),
\end{align*}
so that $Y_u \defeq u^\top \Sigma^{1/2} X$ is an
$O(1)$-Orlicz-$\psi_2$-process by any of the equivalent definitions of
sub-Gaussianity~\citep{BuldyginKo00}.
On the ball $\ball_2^d$,
the covering number $N(\epsilon, \ball_2^d, \ltwo{\cdot})
\le (1 + \frac{2}{\epsilon})^d$, yielding
entropy integral
\begin{align*}
  J(\ball_2^d, \ltwo{\cdot})
  \lesssim \int_0^2 \sqrt{d \log\Big(1 + \frac{2}{\epsilon}\Big)} d\epsilon
  \lesssim \sqrt{d}.
\end{align*}
Set $b = 1$ in inequality~\eqref{eqn:orlicz-concentration} and
take $K = \ball_2^d$.

\clearpage

\end{document}